\useunder{\uline}{\ul}{}
\newtheorem{theorem}{Theorem}
\newtheorem{assumption}[theorem]{Assumption}
\newtheorem{proposition}[theorem]{Proposition}
\newcommand{\revised}[1]{{#1}}
\begin{document}

\title{G3Reg: Pyramid Graph-based Global Registration using Gaussian Ellipsoid Model}
% IROS: Pyramid Semantic Graph-based Global Point Cloud Registration with Low Overlap
% 
\author{Zhijian Qiao, Zehuan Yu, Binqian Jiang, Huan Yin, and Shaojie Shen
        % <-this % stops a space
\thanks{This work was supported in part by the Hong Kong Center for Construction Robotics (InnoHK center supported by Hong Kong ITC), in part by the HKUST Postgraduate Studentship, and in part by the HKUST-DJI Joint Innovation Laboratory.}
\thanks{The authors are with the Department of Electronic and Computer Engineering, The Hong Kong University of Science and Technology, Hong Kong, China. E-mail: zqiaoac@connect.ust.hk, zyuay@connect.ust.hk, bjiangah@connect.ust.hk, eehyin@ust.hk, eeshaojie@ust.hk.}% <-this % stops a space
% \thanks{Manuscript received April 19, 2021; revised August 16, 2021.}
\thanks{Corresponding author: Huan Yin}
}

% The paper headers
% \markboth{Journal of \LaTeX\ Class Files,~Vol.~14, No.~8, August~2021}%
% \markboth{IEEE Transactions on Automation Science and Engineering}%
% {Shell \MakeLowercase{\textit{et al.}}: A Sample Article Using IEEEtran.cls for IEEE Journals}

% \IEEEpubid{0000--0000/00\$00.00~\copyright~2021 IEEE}
% Remember, if you use this you must call \IEEEpubidadjcol in the second
% column for its text to clear the IEEEpubid mark.

\maketitle

\begin{abstract}
This study introduces a novel framework, G3Reg, for fast and robust global registration of LiDAR point clouds. In contrast to conventional complex keypoints and descriptors, we extract fundamental geometric primitives, including planes, clusters, and lines (PCL) from the raw point cloud to obtain low-level semantic segments. Each segment is represented as a unified Gaussian Ellipsoid Model (GEM), using a probability ellipsoid to ensure the ground truth centers are encompassed with a certain degree of probability. Utilizing these GEMs, we present a distrust-and-verify scheme based on a Pyramid Compatibility Graph for Global Registration (PAGOR). Specifically, we establish an upper bound, which can be traversed based on the confidence level for compatibility testing to construct the pyramid graph. Then, we solve multiple maximum cliques (MAC) for each level of the pyramid graph, thus generating the corresponding transformation candidates. In the verification phase, we adopt a precise and efficient metric for point cloud alignment quality, founded on geometric primitives, to identify the optimal candidate. The algorithm's performance is validated on three publicly available datasets and a self-collected multi-session dataset. Parameter settings remained unchanged during the experiment evaluations. The results exhibit superior robustness and real-time performance of the G3Reg framework compared to state-of-the-art methods. Furthermore, we demonstrate the potential for integrating individual GEM and PAGOR components into other registration frameworks to enhance their efficacy.
\end{abstract}

\def\abstractname{Note to Practitioners}
\begin{abstract}
Our proposed method aims to perform global registration for outdoor LiDAR point clouds. Our methodology, which extracts point cloud segments and utilizes their centers for registration, differs from conventional approaches that rely on keypoints and descriptors. We further propose GEM to model the uncertainty of the centers and embed it into our distrust-and-verify framework. In theory, our method can be applied to any registration task that involves primitives representable as sets of Gaussians or points. Additionally, practitioners should consider the following to enhance applicability. First, practitioners can fine-tune the parameters of the segmentation algorithm to generate more repeatable segmentation results. Second, although our default setting uses four compatibility test thresholds, fewer may suffice, especially when translations between point clouds are minor. Finally, for geometrically uninformative segments such as vegetation, consider extracting descriptors\cite{yin2023segregator} within these segments to increase correspondences.
\end{abstract}

\begin{IEEEkeywords}
Global registration, point cloud, LiDAR, graph theory, robust estimation
\end{IEEEkeywords}

\section{Introduction}
\label{sec:introduction}

\begin{figure}[t]
	\centering
	\includegraphics[width=\linewidth]{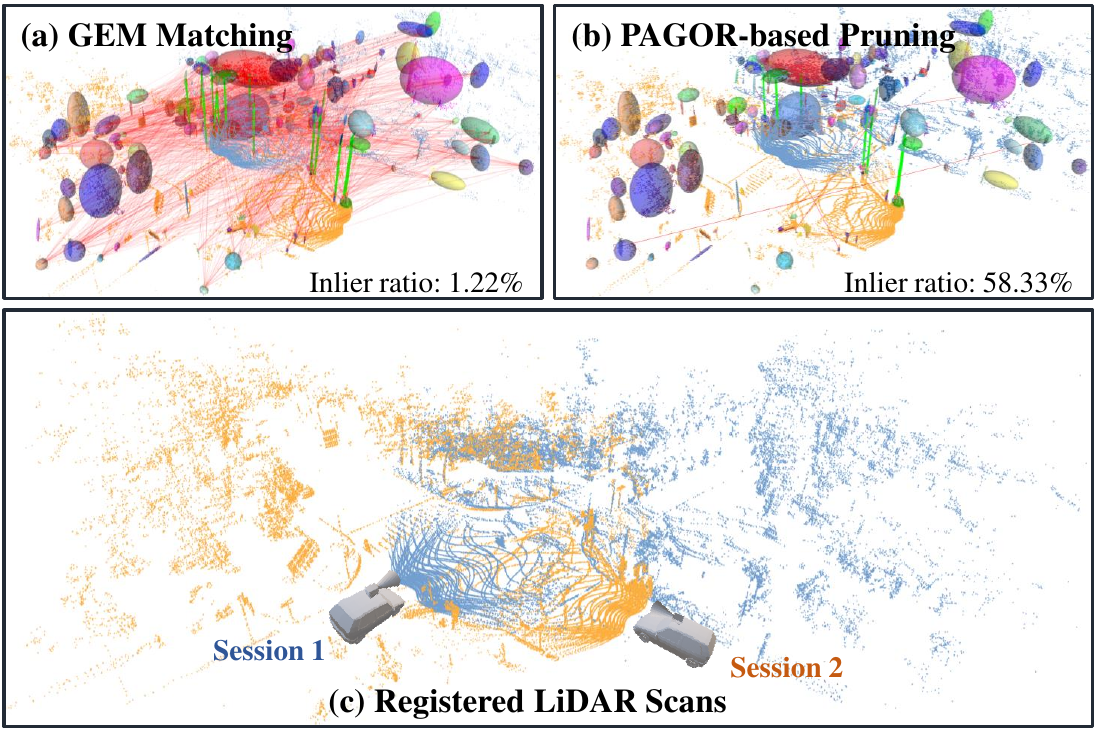}
	\caption{Global registration procedure using our proposed G3Reg. (a) Front-end GEM matching for Gaussian ellipsoid modeling and correspondence building. (b) Back-end PAGOR for outlier pruning and pose estimation under a distrust-and-verify scheme. (c) A challenging global registration result at a road intersection, in which the two LiDAR point clouds are with low overlap and a large view difference.}
	\label{figure:overview}
    \vspace{-0.4cm}
\end{figure}

\IEEEPARstart{A}{ligning} two individual point clouds is a fundamental and indispensable operation in robotics and autonomous systems. Global registration, which estimates the transformation without prior information, has become crucial for tasks such as \revised{loop closure\cite{liu2019lpd,pramatarov2022boxgraph,dube2017segmatch}, multi-session SLAM\cite{yin2023automerge,yu2023multi} and relocalization\cite{yin2023survey,xu2023ring++}}. The standard pipeline of global registration can typically be divided into front-end and back-end stages, which are responsible for establishing putative correspondences $\mathcal{I}$ and estimating the transformation $(\mathbf{R}, \mathbf{t})$, respectively. The transformation estimation can be formulated as a least squares problem:

\begin{equation}
\label{equ:reg_ls}
\min _{\mathbf{R} \in \operatorname{SO}(3), \mathbf{t} \in \mathbb{R}^3} \sum_{\left(x_k, y_k\right) \in \mathcal{I}} \rho\left(r\left(y_k, f\left(x_k \mid \mathbf{R}, \mathbf{t}\right)\right)\right)
\end{equation}
where $\rho(\cdot)$ represents the robust kernel function, $r(\cdot)$ is the residual function, and $f(\cdot)$ denotes the rigid transformation.

\revised{
Conventional approaches for establishing $\mathcal{I}$ rely on complex methods of keypoint and descriptor extraction. These methods include handcrafted techniques\cite{rusu2009fast} and deep learning-based local descriptors\cite{yew2020rpm,ao2021spinnet,poiesi2022learning}. However, these approaches encounter challenges such as excessive processing time and reduced descriptor consistency, primarily due to varying viewpoints. Specifically, during the construction and matching of descriptors, frequent nearest neighbor searches are performed in both the spatial and descriptor space, which can be time-consuming. Additionally, changes in viewpoint result in occlusions and density variations, resulting in alterations in the neighborhood of each point. This further affects the computation of local features, such as normals\cite{rusu2009fast,yew2020rpm} and local reference frames\cite{ao2021spinnet,poiesi2022learning}, which are essential for the consistency of descriptors relying on these local features. Consequently, an extremely high outlier ratio (e.g., 99\%) is observed, significantly undermining the quality of the registration process.
}

% Conventional approaches rely on complex keypoint and descriptor extraction using handcrafted methods\cite{rusu2009fast} or deep learning-based local descriptors\cite{choy2019fully} to establish $\mathcal{I}$. However, they face challenges such as excessive processing time and reduced descriptor discriminability due to uneven density and varying viewpoints. The latter challenge can lead to an extremely high outlier ratio (e.g., 99\%), significantly undermining the quality of registration.

To address this issue, recent advances\cite{yang2020teaser, lusk2021clipper, shi2021robin, lim2022single, yin2023segregator, zhang20233d} suggest utilizing maximal clique inlier selection and coupling it with a robust estimator for transformation estimation. The inlier clique has been proven to belong to at least one maximal clique in the compatibility graph\cite{yang2020teaser}. To efficiently find this maximal clique, the maximum clique with the largest cardinality is computed. However, this strategy does not always work due to inappropriate threshold selection in the compatibility test\cite{shi2021robin}, especially under extreme viewpoint changes and repetitive patterns, as discussed in\cite{qiaoiros2023}.

In this study, we present a novel framework, named G3Reg, for the global registration of LiDAR point clouds that effectively addresses these issues. Our first key insight is to use segments and classify them into geometric primitives (planes, clusters, and lines) as a replacement for complex keypoints and descriptors. This approach has two advantages: firstly, using segments directly avoids the loss of geometric information caused by traditional keypoints; secondly, it eliminates the need for complex keypoint and descriptor extraction, thus reducing computational time. Our second key insight is a novel distrust-and-verify scheme that generates multiple transformation candidates and leverages compressed raw point clouds to verify and select the most appropriate candidate. Specifically, multiple maximum cliques are obtained by constructing a pyramid compatibility graph using a traversable confidence-related upper bound for the compatibility test. Each maximum clique is then used to estimate a transformation candidate.

This work is an extended version of our earlier conference paper\cite{qiaoiros2023}. In comparison, the journal version presents several notable enhancements, including a more efficient front-end approach that eliminates the need for the semantic segmentation network. Additionally, we have significantly improved our distrust-and-verify framework in terms of both efficiency and robustness. The contributions of this work are outlined as follows:

\begin{itemize}
\item We introduce a novel segment-based front-end approach to obtain putative correspondences. In this approach, each segment is parameterized by the proposed GEM, which lays the groundwork for the following multi-threshold compatibility test and enables distribution-to-distribution registration at the back end.
\item We propose a distrust-and-verify scheme that generates multiple transformation candidates by distrusting the compatibility test and verifying their point cloud alignment quality. To achieve this, a pyramid compatibility graph is constructed using the multi-threshold compatibility test, and a graduated MAC solver is proposed to solve the maximum clique on each level. Finally, transformation candidates are estimated from these cliques, and an evaluation function is designed to select the optimal one.
\item We conduct extensive evaluations of our proposed G3Reg on three publicly available datasets and a self-collected multi-session dataset in the real world. The experimental results demonstrate the superiority of our proposed method in terms of robustness and real-time performance compared to state-of-the-art handcrafted and deep learning methods.
\item 
To enhance comprehension and encourage the adoption of our framework, we have released our source code~\footnote{\href{https://github.com/HKUST-Aerial-Robotics/G3Reg}{https://github.com/HKUST-Aerial-Robotics/G3Reg}} as open-source to the community. Our framework not only facilitates the replication of experiments but also offers the flexibility to modify individual components like detectors, matchers, maximum clique solvers, transformation estimators, and scalability to suit different datasets.
\end{itemize}
\section{Related Work}
\label{sec:related}

Global point cloud registration can be categorized into two primary models: correspondence-free and correspondence-based. The correspondence-free model achieves global registration using techniques like Fourier analysis\cite{bernreiter2021phaser}, branch and bound strategies\cite{yang2015go}, and learned feature alignment techniques\cite{li2021pointnetlk, aoki2019pointnetlk}. However, correspondence-free models are generally time-consuming for global transformation search. This paper proposes a correspondence-based method, which involves various modules, including discriminative keypoint detection, descriptor extraction, efficient correspondence ranking for outlier pruning, and robust transformation estimation.  In this section, we delve into related works that concentrate on these individual modules.

\subsection{3D Point Matching}

\subsubsection{3D Keypoint Detection}

The process of 3D point matching involves two main steps: keypoint detection and descriptor designing. In the case of LiDAR point clouds in large-scale scenes, keypoint detection is crucial for reducing the number of point clouds that require registration, thus alleviating the burden on the registration algorithm.

Manual keypoint detection methods use local geometric properties to assess the saliency of a point. For example, ISS\cite{zhong2009intrinsic} uses the eigenvalue ratio of the neighborhood covariance matrix to evaluate point prominence. Similarly, KPQ\cite{mian2010repeatability} utilizes the largest eigenvalue ratio to identify candidate keypoints and calculates curvature for prominence estimation. On the other hand, the deep learning method D3Feat\cite{bai2020d3feat} predicts keypoint scores for each point in dense point clouds, assigning higher scores to matchable matches.

In practical applications, LiDAR-based keypoint detection methods are often affected by factors like noise, occlusion, and point cloud density, resulting in poor repeatability. Additionally, keypoints may discard important geometric information, such as cluttered vegetation and smooth walls. Therefore, methods like Segmatch\cite{dube2017segmatch} and BoxGraph\cite{pramatarov2022boxgraph} have been developed, which extract segments to represent point clouds and use their centers for registration. Segment-based registration methods retain the structural information of the entire point cloud, showing robustness to noise and density variations. \revised{The common parameterization of segments could be 3D bounding box\cite{pramatarov2022boxgraph} or the ellipsoid\cite{liu2021volumon,nicholson2018quadricslam}.} However, the repeatability of their center estimation is significantly influenced by viewpoints.

In our proposed method, we also utilize segments for global registration. The key difference lies in our introduction of a plane-assisted segmentation method to achieve segments with higher repeatability. Additionally, we employ GEM to model the uncertainty of segment centers, ensuring probabilistic coverage of the ground truth center. This approach enhances the robustness and accuracy of our registration process.

\subsubsection{3D Descriptor-based Matching}
\label{sec:rw:des}
\revised{
SOTA works focus on constructing descriptors that are rigid transformation invariant, discriminative, and highly generalizable. To achieve invariance, FPFH\cite{rusu2009fast} and RPMNet\cite{yew2020rpm} compute features such as distances and angles between points and normals that are invariant to transformations to construct their descriptors. Similarly, the local reference frame (LRF) is utilized in techniques like SHOT\cite{tombari2010unique}, SpinNet\cite{ao2021spinnet} and GeDi\cite{poiesi2022learning} to maintain the invariance to the rotation and translation. Furthermore, RoReg\cite{wang2023roreg} introduces the icosahedral group and rotation-equivariant layers to achieve rotation equivariance, leading to rotation invariance through average pooling.}

\revised{However, as the viewpoint changes, descriptor consistency can be compromised by occlusions and density variations, which means the descriptors of the same point from two different point clouds may not be similar. Furthermore, these factors can affect the local feature estimation, such as normals and LRFs, which in turn may degrade the descriptors' discriminative and generalization capabilities.}

\revised{In contrast to point-based methods, our proposed GEM use segments for registration, which naturally reduces the impact of density changes since these do not significantly alter the segment's shape. Additionally, our method considers the uncertainty of segment center through the proposed probability coverage of the center (see Assumption \ref{assump:center}). While GEM does not rely on strong descriptors, we use a weak descriptor coupled with a Mutual-K-Nearest Neighbors (MKNN) matching strategy to establish the initial correspondence set $\mathcal{I}$ in Problem~\eqref{equ:reg_ls}. This approach effectively prevents degeneracy, where inlier correspondences are fewer than three, thus ensuring a more robust registration.}

\subsection{Outlier Pruning}

The robust loss function $\rho(\cdot)$ in Problem~\eqref{equ:reg_ls} is known for effectively rejecting outliers, but it may face challenges when dealing with problems involving high outlier ratios, such as up to 99\% outliers. To reduce the outlier ratio to a level where existing solvers (such as RANSAC and GNC\cite{yang2020graduated}) perform well, a common approach in the computer vision community involves employing reciprocity checks and ratio tests\cite{lowe2004distinctive}. Unlike the aforementioned approaches that heavily rely on descriptor performance, GORE\cite{bustos2017guaranteed} reformulates the problem, leveraging deterministic geometric properties to establish an upper bound on the inlier set size subproblem and a lower bound on the optimal solution, effectively eliminating incorrect matches.

In recent years, graph-theoretic methods have emerged as powerful tools in various registration works, demonstrating exceptional performance regardless of descriptor quality. Based on the compatibility graph, Enqvist \textit{et al.}\cite{enqvist2009optimal} propose using vertex cover to identify mutually consistent correspondences. MV (Mutual Voting)\cite{yang2023mutual} introduces a mutual voting method for ranking 3D correspondences, enabling vertices and edges to refine each other in a mutually beneficial manner. Another widely used and stricter algorithm is the maximal clique inlier selection\cite{yang2020teaser, bailey2000data, lusk2021clipper, lusk2022graffmatch, yin2023segregator,zhang20233d}, which requires every correspondence to be mutually compatible. However, finding the maximal clique is known to be an NP-hard problem, and its runtime often becomes impractical as the number of correspondences increases. To address this challenge, ROBIN\cite{shi2021robin} proposes a faster alternative by computing the maximum $k$-core of the graph.

In our proposed method, we also adopt the maximum clique approach but introduce a traversable confidence-related threshold to construct a pyramid compatibility graph and solve multiple maximum cliques in a graduated manner. This approach effectively increases the probability that inliers make up one of the solved maximum cliques.

\subsection{Robust Transformation Estimation}

Robust transformation estimation has been extensively studied in various research fields\cite{carlone2022estimation}. It can be formulated using different approaches, including least trimmed squares (LTS) in classical robust statistics\cite{chetverikov2002trimmed}, consensus maximization (MC) in computer vision\cite{peng2022arcs}, quadratic pose estimation problems\cite{wu2022quadratic}, and truncated least squares (TLS) in robotics\cite{yang2020teaser, zhou2016fast}. Trimmed ICP\cite{chetverikov2002trimmed} selects potential inlier correspondences and estimates the transformation using a predefined trimming percentage, which is often unknown in real applications. On the other hand, MC defines a consensus set that contains consistent correspondences with residuals below a threshold and aims to find the optimal $(\mathbf{R}, \mathbf{t})$ that maximizes the size of the consensus set. However, MC becomes NP-hard\cite{chin2018robust} when $\mathbf{R} \in \operatorname{SO}(3)$, leading to the adoption of the suboptimal but efficient RANSAC algorithm to find the outlier-free set. Nevertheless, the runtime of RANSAC increases exponentially with the outlier rate.

In this study, we formulate the registration problem as a truncated least squares (TLS) problem. To address this problem, we leverage the Black-Rangarajan duality\cite{black1996unification} to rewrite TLS using the corresponding outlier process. To solve the TLS problem, we adopt the fast heuristic Graduated Non-Convexity (GNC) algorithm based on alternating optimization\cite{yang2020graduated}. The GNC approach starts with a convex approximation of the cost function and gradually refines it to recover the original function. By employing graph-theoretic outlier pruning, GNC performs well in scenarios with moderate outlier rates, such as 80\%. The primary difference between our formulation and conventional ones\cite{yang2020teaser} lies in the usage of a distribution-to-distribution residual rather than a point-to-point residual, following a similar approach to Generalized ICP (GICP)\cite{segal2009generalized}.
\begin{figure*}[htbp]
	\centering
	\includegraphics[width=\linewidth]{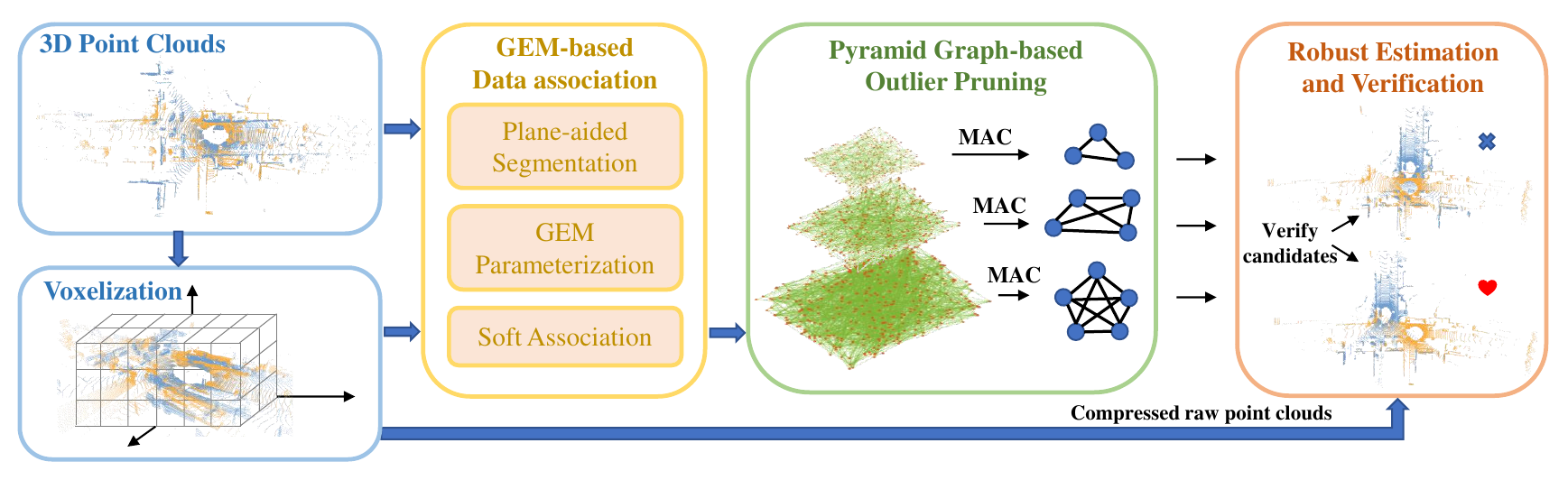}
	\caption{The proposed global registration method G3Reg adopts a distrust-and-verify framework, consisting of three components. First, putative correspondences between input point clouds are established via GEM-based data association (block in yellow). Then a pyramid compatibility graph is constructed to generate maximal cliques to obtain transformation candidates (block in green). Finally, the original point cloud information is re-utilized to select the optimal candidate (block in brownish-orange). }
	\label{figure:pipeline}
\end{figure*}
\section{Problem Formulation}

Given two 3D LiDAR point clouds, denoted as $\mathcal{X}$ and $\mathcal{Y}$, our primary goal is to estimate the optimal rigid transformation, represented as $(\tilde{\mathbf{R}}, \tilde{\mathbf{t}})$, that best aligns the source point cloud $\mathcal{X}$ with the target point cloud $\mathcal{Y}$. To achieve this, we first extract Gaussian Ellipsoid Models (GEM) from the segments within the point clouds during the front-end processing phase (Section~\ref{sec:gmm_front}). Consequently, we can denote $\mathcal{X}$ and $\mathcal{Y}$ as a series of GEMs, $\{x_1 \ldots x_{N_x} \}$ and $\{y_1 \ldots y_{N_y} \}$ respectively.

In the subsequent back-end phase of the framework, our strategy for estimating the optimal transformation parameters $(\tilde{\mathbf{R}}, \tilde{\mathbf{t}})$ employs a distrust-and-verify scheme. This scheme generates multiple transformation candidates using the extracted GEMs and then selects the most probable candidate that optimally aligns $\mathcal{X}$ and $\mathcal{Y}$.

To hypothesize a candidate transformation $(\mathbf{R}^*, \mathbf{t}^*)$, we construct a compatibility graph (Section \ref{sec:graph_build}) for the original putative correspondence set $\mathcal{I}$ and derive a potential inlier set $\mathcal{I}^*$ by identifying the maximum clique within the graph (Section \ref{sec:mac_solver}). We adapt the formulation presented in Equation \eqref{equ:reg_ls} to express the registration problem as follows:

\begin{equation}
\label{equ:sub_reg}
\mathbf{R}^*, \mathbf{t}^*=\min \sum_{\left(x_k, y_k\right) \in \mathcal{I}^*} \rho\left(r\left(y_k, f\left(x_k \mid \mathbf{R}, \mathbf{t}\right)\right)\right)
\end{equation}

In this equation, $\mathcal{I}$ is replaced with $\mathcal{I}^*$ compared to the formulation in Equation \eqref{equ:reg_ls}. The robust loss function $\rho(\cdot)$ and the residual function $r(\cdot)$ of two associated GEMs are defined in Section \ref{sec:tf_solve}.

By traversing the compatibility test threshold across various confidence levels (Section \ref{sec:comp_test}) to construct a pyramid compatibility graph, we can derive multiple potential $\mathcal{I}^{*}_{m}$ from their maximum cliques. We can then solve the corresponding hypothetical transformation $(\mathbf{R}^{*}_{m}, \mathbf{t}^{*}_{m})$ using Equation \eqref{equ:sub_reg}. Subsequently, we introduce an evaluation function $g$ (Section~\ref{sec:verify}) to determine the most suitable transformation based on the geometric information of $\mathcal{X}$ and $\mathcal{Y}$, as follows:

\begin{equation}
\label{eq:eval}
\tilde{\mathbf{R}}, \tilde{\mathbf{t}}=\underset{\tilde{\mathbf{R}} \in \left\{\mathbf{R}_{m}^{*}\right\}, \tilde{\mathbf{t}} \in \left\{\mathbf{t}_{m}^{*}\right\}}{\arg \min } g\left(\mathbf{R}_{m}^{*}, \mathbf{t}_{m}^{*} \mid \mathcal{X}, \mathcal{Y} \right)
\end{equation}

In summary, our proposed G3Reg method firstly introduces the novel concept of GEM and utilizes it to obtain the putative correspondence set. It then follows a carefully designed distrust-and-verify framework that facilitates fast and robust registration.

\section{GEM-based Data Association}
\label{sec:gmm_front}

In this section, we present our method for extracting Gaussian Ellipsoid Models (GEMs) from two given point clouds and acquiring the initial putative correspondences, $\mathcal{I}$. This entire procedure is composed of three steps. First, we design a plane-aided segmentation algorithm to process the input LiDAR scans, resulting in a series of plane, line and cluster (PCL) segments (Section~\ref{sec:pas}). Second, for each of these segments, we model it as a GEM. This is achieved by estimating its statistical Gaussian distribution parameters along with an additional pseudo-Gaussian parameter (Section~\ref{sec:para_est}). Lastly, based on the GEM parameters, we select a subset of these GEMs that are deemed significant and implement a soft association strategy. This strategy is facilitated through the use of mutual K nearest neighbors (Section~\ref{sec:soft_ass}).

\subsection{Plane-aided Segmentation}
\label{sec:pas}
\begin{figure}[htbp]
	\centering
	\includegraphics[width=0.98\linewidth]{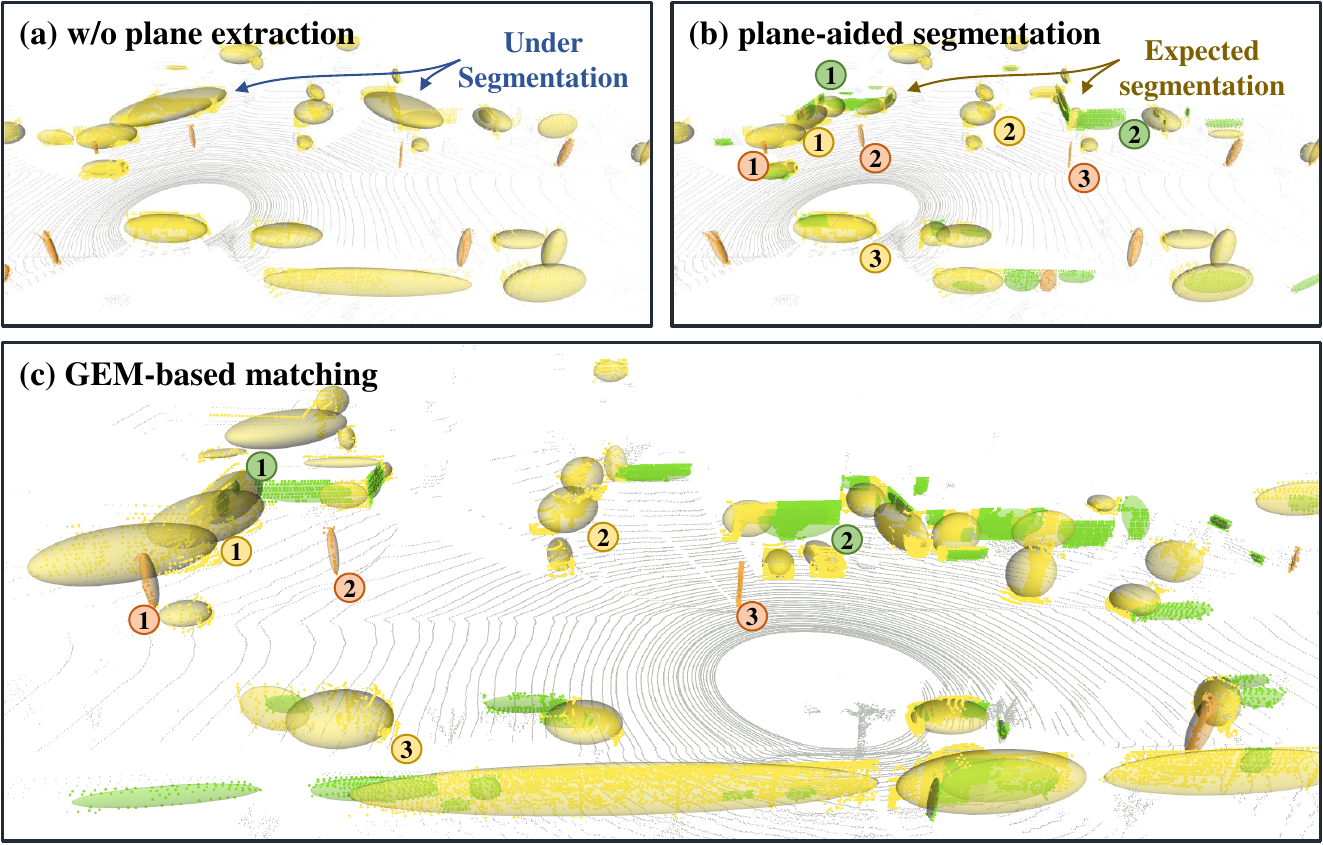}
	\caption{Visualization of GEM-based data association. Green, yellow, and orange represent planes, clusters, and lines respectively. (a) and (b) show the difference in segmentation results without or with plane assistance. As the arrow indicates, the former leads to under-segmentation. (c) is a different frame from (b). We use different colors and indices to illustrate the matching between GEMs of these two frames.}
	\label{figure:plane_aided}
\end{figure}

The initiation of GEM extraction involves identifying repeatable segments within a LiDAR point cloud from varying viewpoints. We segment point clouds with reference to TRAVEL\cite{oh2022travel}; however, a significant distinction lies in our categorization of the resulting segments into three types: planes, lines, and clusters (PCL). This methodology is underpinned by three considerations. First, planes and lines, being distinct geometric structures, are advantageous for precise point cloud registration. Second, plane segmentation helps prevent extensive portions of the point cloud segments from being incorrectly merged, resulting in under-segmentation. This strengthens the repeatability of the derived clusters. Lastly, these low-level semantics assist in data association. Figure~\ref{figure:plane_aided} presents a visual comparison of outcomes obtained with and without plane-assisted segmentation, and also the GEM-based matching proposed in our approach.

More specifically, we first voxelise the point cloud with a voxel size of $s_v$, and determine whether each voxel forms a plane based on the ratio of the smallest eigenvalue $\lambda_3$ to the second smallest eigenvalue $\lambda_2$ and its threshold $\sigma_p$. We then apply region growing to merge plane voxels into plane segments. The decision to merge plane voxels is predicated on several factors, including point-to-plane distance, the similarity of normal vectors $n_1$ and $n_2$, and adjacency relations. These conditions are elaborated as follows:

\begin{enumerate}
\item Distance condition: $abs(n_{1}^{\mathrm{T}}(p_1-p_2))$ and $abs(n_{2}^{\mathrm{T}}(p_1-p_2))$ are smaller than a threshold $\sigma_{p2n}$. Here, $p$ represents the center of plane points.
\item Normal condition: $abs(n_{1}^{\mathrm{T}}n_2)$ is smaller than a threshold $\sigma_n$.
\item Adjacency condition: two planes should contain at least one voxel that is adjacent to each other.
\end{enumerate}

\revised{Subsequently, we segment the remaining point cloud into clusters using the TRAVEL algorithm~\cite{oh2022travel}. This step is the most time-consuming in the GEM extraction process, with a time complexity of \(O(n\log(n))\) where $n$ is the total number of nodes in a range image row.} Furthermore, for each cluster, we employ RANSAC to fit a line, employing a point-to-line distance threshold $\sigma_{p2l}$. Clusters that exhibit an inlier rate exceeding a predefined threshold $\sigma_{line}$ are identified as lines. Ultimately, every voxel is re-labeled based on the segmented PCL segments and is used in the verification stage for evaluating alignment quality (Section~\ref{sec:verify}).

\subsection{Parameter Estimation}
\label{sec:para_est}

\begin{algorithm}[htbp]
    \label{alg:gem}
    \SetAlgoLined
    \NoCaptionOfAlgo
    \caption{\textbf{Data Structure 1}: Gaussian Ellipsoid Model}
    \SetKwProg{GEMNode}{Struct}{:}{end}
    \GEMNode{$\mathtt{GEM}$}{
        PCL $\mathtt{type}$; \% Plane, cluster or line\\
        Vector3 $\mathtt{\mu}$; \% Statistical center\\
        Matrix3 $\Sigma$; \% Statistical covariance\\
        Matrix3 $\hat{\Sigma}$; \% Pseudo covariance\\
        Vector3 $\hat{\lambda}$; \% Eigenvalues of $\hat{\Sigma}$\\
    }
\end{algorithm}   

In this section, we represent the low-level semantic segments obtained in the previous step via a standardized parameterization mentioned as the Gaussian Ellipsoid Model (GEM). The attributes of the GEM are outlined in \textbf{Data Structure~\ref{alg:gem}}. Each GEM's type is associated with its specific geometric primitive. We directly derive the statistical Gaussian distribution parameters from the parameters of the contained voxels, thereby eliminating the need for recomputation:
\begin{equation}
\begin{aligned}
& \mu=\frac{1}{\sum_{k=1}^{N_{v}} N_k} \sum_{k=1}^{N_{v}} N_k \mu_{k} \\
& \Sigma=\frac{1}{\sum_{k=1}^{N_{v}} N_k} \sum_{k=1}^{N_{v}}\left(N_k\left(\Sigma_k+\mu_{k} \mu_{k}^{\mathrm{T}}\right)\right)-u u^{\mathrm{T}}
\end{aligned}
\end{equation}
where $N_{v}$ is the number of contained voxels, and ${N_{v}}$ and $(\mu_{k},\Sigma_k)$ denote the point number, statistical center and covariance matrix of the $k$-th voxel, respectively.

Generally, the distribution of an object in the real world adheres to a uniform distribution. Therefore, the use of statistical Gaussian covariance as a descriptor may result in distortions and limitations, particularly when confronted with variations in point density. To address this issue, we introduce pseudo-Gaussian parameters to model the uncertainty associated with a segment's center point $\mu$, based on Assumption \ref{assump:center}, as illustrated in Figure \ref{fig:gem_obb}.

\begin{assumption}[Probability Coverage of Center] \label{assump:center}
Given an observed segment, the observation's incompleteness implies that the statistically derived $\mu$ cannot represent its true center $\hat{\mu}$. However, we expect $\hat{\mu}$ to converge within the minimal 3D Oriented Bounding Box (OBB) that encloses the segment, with a high degree of probability. Moreover, the difference $\delta\mu$ between $\mu$ and $\hat{\mu}$ is constrained within this 3D OBB, as both $\mu$ and $\hat{\mu}$ converge within it.
\end{assumption}

Following this assumption, we estimate pseudo-Gaussian parameters in two stages. Initially, we regress the 3D OBB by projecting the points onto a 2D plane along a specified axis (normal for plane and cluster, direction for line). We then derive the 2D OBB from the projected points as per \cite{freeman1975determining}. The 3D OBB is finally attained by merging the 2D OBB with its projection axis.

In the subsequent stage, we derive the 3D Oriented Bounding Ellipsoid (OBE) that is tangential to the 3D OBB and designate it as the probability ellipsoid for the center point $\mu$ of the GEM. Assuming that $\mu \in \mathcal{N}(\hat{\mu}, \hat{\Sigma})$, this probability ellipsoid can be represented as follows:

\begin{equation}
(\mu-\hat{\mu})^{\mathrm{T}} \hat{\Sigma}^{-1}(\mu-\hat{\mu}) \leq \chi_{(p)}^2
\end{equation}
where $\chi_{(p)}^2$ is the 3-DoF $\chi^2$ value with a probability $(1-p)$ to reject the possible center outside of the ellipsoid. Based on the estimated size $s$ and orientation $\mathbf{R}_{o}$ of OBE, we obtain
\begin{equation}
\hat{\lambda}i=\left(\frac{s_i}{2 \sqrt{\chi_{(p)}^2}}\right)^2
\end{equation}
\begin{equation}
\hat{\Sigma}=\mathbf{R}_{o} \operatorname{diag}(\hat{\lambda}) {\mathbf{R}_{o}}^{\mathrm{T}}
\end{equation}

We set $\chi_{(p)}^2=\chi_{(0.05)}^2=7.815$ to ensure a 95\% probability that the ground truth center converges. The derived pseudogaussian parameters offer an account of the uncertainty in the segment's center and will be utilized in the compatibility test in Section~\ref{sec:comp_test}. Conversely, the statistical Gaussian parameters, which capture the geometric features of the segments, will be used in Section~\ref{sec:soft_ass} to build putative correspondences and Section~\ref{sec:tf_solve} to estimate the transformation.

\begin{figure}[tbp]
	\centering
	\subfigure[]{	
		\includegraphics[width=8cm]{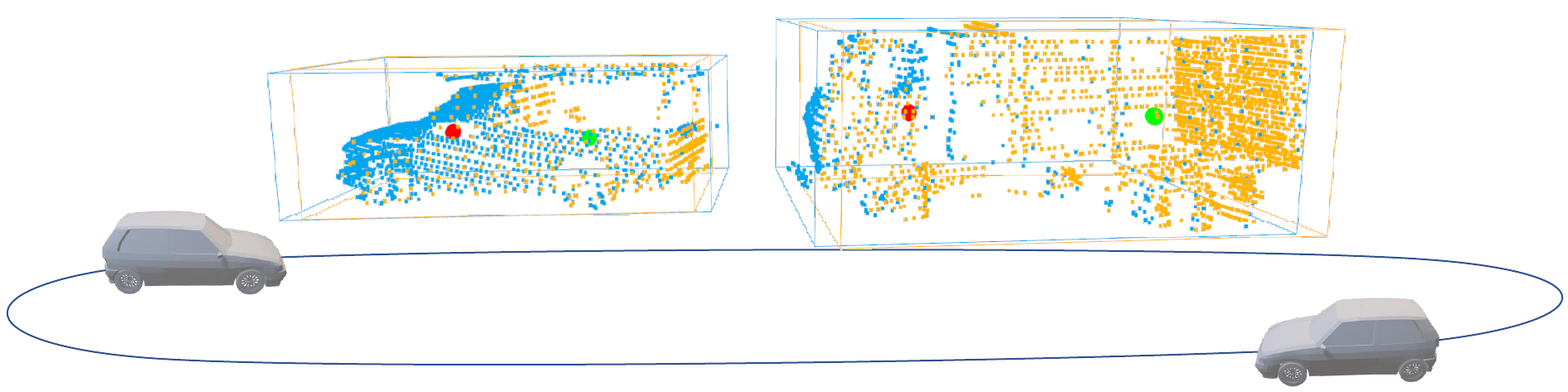}}
	\subfigure[]{	
		\includegraphics[width=8cm]{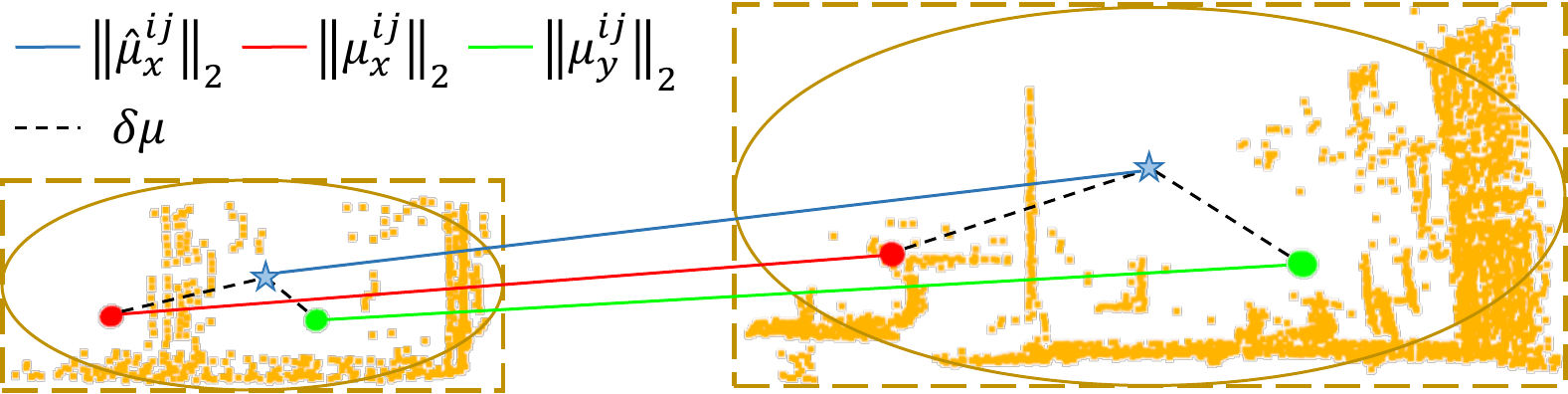}}
	\caption{(a) The blue and orange point clouds are from a pair of loop closure frames. The figure shows two clusters and their centers (the red and green spheres represent the centers of the blue and yellow clusters, respectively) with 3D OBBs. (b) The red and green spheres are defined the same as in (a), and the blue pentagrams are the ground truth centers of the two clusters. The deviations of the observed centers from the ground truth are constrained within OBEs tangent to the OBBs. In addition, the translation and rotation invariant measurements (TRIMs) $\|\mu^{ij}_x{\|_{2}}$ and $\|\mu^{ij}_y{\|_{2}}$ and their ground truths $\|\hat{\mu}_{x}^{ij}{\|_{2}}$ are denoted by the red, green and blue lines, respectively.}
    \label{fig:gem_obb}
\end{figure}

\subsection{Soft Association}
\label{sec:soft_ass}

Starting with a collection of GEMs extracted from the point cloud, we initially select the top-J GEMs from each type of PCL segment. This selection process is based on respective measures of area, volume, and length, as larger segments generally demonstrate superior stability. Following this, we employ a mutual-K-nearest neighbor (MKNN) matching strategy for GEMs that share labels, generating the initial set of putative correspondences represented as $\mathcal{I}$. This MKNN strategy aids in enhancing the likelihood of including true correspondences. The distance between a pair of GEMs $(x, y)$ utilized in the MKNN is determined by the Wasserstein distance of their statistical covariance matrices,

\begin{equation}
\begin{aligned}
\mu_x^{\prime}&=\mathbf{R}^* \mu_x+\mathbf{t}^*\\
\Sigma_x^{\prime}&=\mathbf{R}^* \Sigma_x{\mathbf{R}^{*}}^{\mathrm{T}}
\end{aligned}
\end{equation}

\begin{equation}
\begin{aligned}
W&\left(f\left(x \mid \mathbf{R}^*, \mathbf{t}^*\right), y\right)=\left\|\mu_x^{\prime}-\mu_y\right\|_2^2\\
&+\operatorname{Tr}\left(\Sigma_x^{\prime}+\Sigma_y-2\left(\left(\Sigma_x^{\prime}\right)^{\frac{1}{2}} \boldsymbol{\Sigma}_y\left(\Sigma_x^{\prime}\right)^{\frac{1}{2}}\right)^{\frac{1}{2}}\right)
\end{aligned}
\end{equation}

In an ideal scenario, the optimal rigid transformation $(\mathbf{R}^*, \mathbf{t}^*)$ is ascertained via Equation~\eqref{equ:sub_reg}. However, because it is infeasible during the data association step, we instead derive its substitute $\left(R_j^o R_i^{o T}, \mu_y-R_j^o R_i^{o T} \mu_x\right)$ by aligning the two GEMs.

Our proposed MKNN-based association method not only eliminates the need to devise complex descriptors—which are typically employed for one-to-one matching—but also avoids the quadratic growth associated with all-to-all matching. The effectiveness of the Wasserstein distance is validated through comparisons with other metrics in Section~\ref{exp:front_end}.

\section{Graph-theoretic Outlier Pruning}
\label{sec:graph}

In this section, we introduce an outlier pruning method based on a pyramid compatibility graph. This approach leverages the compatibility graph (Section~\ref{sec:graph_build}), which captures the inter-compatibility among putative correspondences. We assume that inliers will form the maximum clique within this graph. Our framework, operating on a distrust-and-verify scheme, utilizes a multi-threshold compatibility test (Section~\ref{sec:comp_test}) to construct a pyramid compatibility graph. Subsequently, a graduated maximum clique solver is introduced (Section~\ref{sec:mac_solver}) to identify the maximum cliques (MACs) at each level and construct multiple inlier sets. These sets are then employed to generate an array of transformation candidates (Section~\ref{sec:tf_solve}).

\subsection{Compatibility Graph Construction}
\label{sec:graph_build}

Given a set of GEM correspondences $\mathcal{I}=\{(x_k,y_k)\}$, the centers $(\mu_x^k,\mu_y^k)$ of each GEM correspondence follows the following model,
\begin{equation}
\begin{aligned}
\mu_y^k&=\mathbf{R}^* \mu_x^k+\mathbf{t}^*+\mathbf{o}_k+\boldsymbol{\epsilon}_k\\
\mu_x^k &\sim \mathcal{N}\left(\hat{\mu}_x^k, \hat{\Sigma}_x^k\right), \mu_y^k \sim \mathcal{N}\left(\hat{\mu}_y^k, \hat{\Sigma}_y^k\right)\\
\boldsymbol{\epsilon}_k &\sim \mathcal{N}\left(0, \mathbf{R}^* \hat{\Sigma}_x^k {\mathbf{R}^{*}}^{\mathrm{T}}+\hat{\Sigma}_y^k\right)
\end{aligned}
\end{equation}
where $\mu$, $\hat{\mu}$ and $\hat{\Sigma}$ represent the center, true center and pseudo covariance matrix of the GEM, respectively. For inlier correspondences, $\mathbf{o}_k$ is a zero vector, while for outlier correspondences, it is an arbitrary vector. $\boldsymbol{\epsilon}_k$ captures the uncertainty of the GEM center resulting from viewpoint changes and occlusion.

For any pair of correspondences $(x_i,y_i)$ and $(x_j,y_j)$, we can construct a pairwise invariant using two \revised{translation and rotation invariant measurements (TRIMs\cite{yang2020teaser})} for the compatibility test. TRIMs are defined as $\|\mu_x^i-\mu_x^j\|_2$ and $\|\mu_y^i-\mu_y^j\|_2$, exploiting the length-preserving property of rigid transformation, as shown in Figure \ref{fig:gem_obb}. The two correspondences pass the compatibility test if their pairwise invariant satisfies the condition:
\begin{equation}
\left|\left\|\mu_x^i-\mu_x^j\right\|_2-\left\|\mu_y^i-\mu_y^j\right\|_2\right|<\delta_{ij}
\end{equation}
where $\delta_{ij}$ is a threshold used to reject potential outliers in this correspondence pair. It is determined based on the pseudo covariance of the four involved GEMs and will be analyzed in detail in Section \ref{sec:comp_test}.

Finally, we construct a compatibility graph by representing each correspondence as a vertex and connecting an edge between two vertices if they pass the compatibility test.

\setcounter{algocf}{0}
\begin{algorithm}[htbp]
	\SetAlgoLined
    \label{alg:graph_build}
    \caption{Compatibility Graph Construction}
    \SetKwInOut{Input}{Input}
    \SetKwInOut{Output}{Output}
    \SetKwInOut{Param}{Params}
	\Param{Pre-defined $\mathtt{p}$-values $p_1 \geq p_2 \geq \ldots \geq p_M$}
    \Input{Initial putative GEM correspondences $\mathcal{I}=\{\mathtt{(x_k,y_k)|k=1:N}\}$}
    \Output{Pyramid graph $\{\mathtt{\mathcal{G}_m|m=1:M}\}$}  

    \SetKwFunction{graphBuild}{$\mathtt{ConstructGraphs}$}
    \SetKwFunction{UpperBound}{$\mathtt{UpperEigenvalue}$}
    \SetKwFunction{addedge}{$\mathtt{addEdge}$}
	\SetKwFunction{return}{$\mathtt{return}$}
    \SetKwProg{Fn}{Function}{}{}
	\Fn{\graphBuild{$\mathcal{I}$}}{
        \% parallel running\\
        \For{$\mathtt{(i,j)~in~}$$\mathtt{\left(\begin{matrix}N\\2\\\end{matrix} \right)}$}{
            $\mathtt{d_{ij}=\left| \|\mu_x^i-\mu_x^j{{\|}_{2}}-\|\mu_y^i-\mu_y^j{{\|}_{2}} \right|}$; \label{alg:trims} \\
            $\mathtt{\lambda _{x,1}^{ij}=}$\UpperBound{$\mathtt{\hat{\Sigma }_{x}^{i}}$, $\mathtt{\hat{\Sigma }_{x}^{j}}$};\\
            $\mathtt{\lambda _{y,1}^{ij}=}$\UpperBound{$\mathtt{\hat{\Sigma }_{y}^{i}}$, $\mathtt{\hat{\Sigma }_{y}^{j}}$}; \label{alg:ub}\\
            \For{$\mathtt{m=1:M}$}{
                $\mathtt{\delta_{ij}=\sqrt{\chi _{(p_m)}^{2}\lambda _{x,1}^{ij}}+\sqrt{\chi _{(p_m)}^{2}\lambda _{y,1}^{ij}}}$;\\
                \If{$\mathtt{d_{ij}} \leq \mathtt{\delta_{ij}}$}{
                    \For{$\mathtt{k=m:M}$}{
                        $\mathtt{\mathcal{G}_k}.$\addedge{$\mathtt{i}$,$\mathtt{j}$};
                    }
                    $\mathtt{break}$;
                }
            }
        }
	}
	\textbf{End Function}
 
    \SetKwProg{Fn}{Function}{}{}
	\Fn{\UpperBound{$\mathtt{\Sigma_1,\Sigma_2}$}}{\label{alg:rho_ub}
        $\mathtt{\Sigma=\Sigma_1+\Sigma_2}$;\\
        \%Perron Frobenius theorem\\
        $\mathtt{ub_1=\max \left( \sum_{i=1}^{3} |\Sigma_{i,j}| \right) \quad \forall j \in \{1,2,3\}}$;\\
        \%Wolkowicz's method\\
        $\mathtt{m \gets \frac{\text{trace}(\Sigma)}{3}}$;
        $\mathtt{s^2 \gets \frac{\text{trace}(\Sigma^2)}{3} - m^2}$;\\
        $\mathtt{ub_2=m + \sqrt{2s^2}}$;\\
        \%Weyl's inequality\\
        $\mathtt{ub_3=\lambda_{1}{(\Sigma_1)}+\lambda_{1}{(\Sigma_2)}}$;\\
        \return{$\mathtt{\min(ub_1,ub_2,ub_3)}$}
	}
	\textbf{End Function}
\end{algorithm}

\subsection{Multi-Threshold Compatibility Test}
\label{sec:comp_test}

In this section, we delve into the methodology for selecting the appropriate $\delta_{ij}$, an essential parameter that helps ascertain whether a pair of correspondences meet the criteria of the compatibility test. As previously detailed in Section \ref{sec:para_est}, we have utilized pseudo Gaussian distribution parameters to effectively model the uncertainty inherent in the center $\mu$. This leads us to the following relationships:
\begin{equation}
\begin{aligned}
 &{{\mu}_{x}^i}\sim \mathcal{N}\left( \hat{\mu }_{x}^{i},\hat{\Sigma }_{x}^{i} \right), {{\mu}_{x}^j}\sim \mathcal{N}\left( \hat{\mu }_{x}^{j},\hat{\Sigma }_{x}^{j} \right) \\ 
 &{{\mu}_{y}^i}\sim \mathcal{N}\left( \hat{\mu }_{y}^{i},\hat{\Sigma }_{y}^{i} \right), {{\mu}_{y}^j}\sim \mathcal{N}\left( \hat{\mu }_{y}^{j},\hat{\Sigma }_{y}^{j} \right) \\ 
 & {{\mu}_{x}^i}-{{\mu}_{x}^j}\sim \mathcal{N}\left( \hat{\mu }_{x}^{i}-\hat{\mu }_{x}^{j},\hat{\Sigma }_{x}^{i}+\hat{\Sigma }_{x}^{j} \right)=\mathcal{N}\left( \hat{\mu }_{x}^{ij},\hat{\Sigma }_{x}^{ij} \right) \\ 
 & {{\mu}_{y}^i}-{{\mu}_{y}^j}\sim \mathcal{N}\left( \hat{\mu }_{y}^{i}-\hat{\mu }_{y}^{j},\hat{\Sigma }_{y}^{i}+\hat{\Sigma }_{y}^{j} \right)=\mathcal{N}\left( \hat{\mu }_{y}^{ij},\hat{\Sigma }_{y}^{ij} \right) \\ 
\end{aligned}
\end{equation}

Let $\mu^{ij}_x={{\mu}_{x}^i}-{{\mu}_{x}^j}=\hat{\mu }_{x}^{ij}+\epsilon _{x}^{ij}$, where $\epsilon _{x}^{ij}\sim \mathcal{N}\left( 0,\hat{\Sigma }_{x}^{ij} \right)$, and $\mu^{ij}_y={{\mu}_{y}^i}-{{\mu}_{y}^j}=\hat{\mu }_{y}^{ij}+\epsilon _{y}^{ij}$, where $\epsilon _{y}^{ij}\in N\left( 0,\hat{\Sigma }_{y}^{ij} \right)$. We consider the two correspondences be both inliers, thus $\hat{\mu }_{y}^{ij}={{\mathbf{R}}^{*}}\hat{\mu }_{x}^{ij}$. According to the triangle inequality, we have 
\begin{equation}
\label{eq:1}
    \|\mu^{ij}_x{{\|}_{2}}\le \|\hat{\mu }_{x}^{ij}{{\|}_{2}}+\|\epsilon _{x}^{ij}{{\|}_{2}}
\end{equation}
in which $\epsilon _{x}^{ij}$ characterizes the joint uncertainty between the GEMs $\mu_x^i$ and $\mu_x^j$. This joint uncertainty can also be similarly confined within its corresponding uncertainty ellipsoid. To simplify matters and maintain generality, we adopt the probability $p$ as a means to model this uncertainty, as detailed below:

\begin{equation}
\label{eq:2}
    {{(\epsilon _{x}^{ij})}^{\mathrm{T}}}{{(\hat{\Sigma }_{x}^{ij})}^{-1}}\epsilon _{x}^{ij}\le \chi _{(p)}^{2}
\end{equation}
where $\hat{\Sigma }_{x}^{ij}$ is a symmetric and positive-definite matrix that can be expressed in a diagonalized form, as illustrated below:
\begin{equation}
\hat{\Sigma }_{x}^{ij}=U\left[ \begin{matrix}
   \lambda _{x,1}^{ij} & 0 & 0  \\
   0 & \lambda _{x,2}^{ij} & 0  \\
   0 & 0 & \lambda _{x,3}^{ij}  \\
\end{matrix} \right]{{U}^{\mathrm{T}}}
\end{equation}
where $U$ is an orthogonal matrix and $\lambda _{x,1}^{ij} \geq \lambda _{x,2}^{ij} \geq \lambda _{x,3}^{ij}$. Furthermore, we can get a lower bound on ${{(\epsilon _{x}^{ij})}^{\mathrm{T}}}{{(\hat{\Sigma }_{x}^{ij})}^{-1}}\epsilon _{x}^{ij}$,
\begin{equation}
\label{eq:3}
\begin{aligned}
&{{(\epsilon _{x}^{ij})}^{\mathrm{T}}}{{(\hat{\Sigma }_{x}^{ij})}^{-1}}\epsilon _{x}^{ij}\\
&={{({{U}^{\mathrm{T}}}\epsilon _{x}^{ij})}^{\mathrm{T}}}{{\left[ \begin{matrix}
   \lambda _{x,1}^{ij} & 0 & 0  \\
   0 & \lambda _{x,2}^{ij} & 0  \\
   0 & 0 & \lambda _{x,3}^{ij}  \\
\end{matrix} \right]}^{-1}}({{U}^{\mathrm{T}}}\epsilon _{x}^{ij})\\
&=\sum\limits_{k}^{3}{\frac{1}{\lambda _{xk}^{ij}}{{(U^{\mathrm{T}}\epsilon _{x}^{ij})_k}^{2}}}\ge \frac{1}{\lambda _{x,1}^{ij}}\sum\limits_{k}^{3}{{{(U^{\mathrm{T}}\epsilon _{x}^{ij})_k}^{2}}}\\
&=\frac{1}{\lambda _{x,1}^{ij}}{{(\epsilon _{x}^{ij})}^{\mathrm{T}}}\epsilon _{x}^{ij} = \frac{1}{\lambda _{x,1}^{ij}} \|\epsilon _{x}^{ij}{{\|}_{2}^2}\\ 
\end{aligned}
\end{equation}
where $\lambda _{x,1}^{ij}$ is the largest eigenvalue of $\hat{\Sigma }_{x}^{ij}$. By consolidating equations \eqref{eq:1}, \eqref{eq:2} and \eqref{eq:3}, we can derive an upper bound for $\|\mu^{ij}_x\|{2}$, as follows:
\begin{equation}
\label{eq:4}
\begin{aligned}
 \|\mu^{ij}_x{{\|}_{2}}&\le \|\hat{\mu }_{x}^{ij}{{\|}_{2}}+\|\epsilon _{x}^{ij}{{\|}_{2}}\\
 &\le \|\hat{\mu }_{x}^{ij}{{\|}_{2}}+\sqrt{\chi _{(p)}^{2}\lambda _{x,1}^{ij}} \\ 
\end{aligned}
\end{equation}

Following a similar derivation, we can establish the upper bound of $|\mu^{ij}_y|_2$ as follows,
\begin{equation}
\label{eq:5}
\begin{aligned}
 \|\mu^{ij}_y{{\|}_{2}}&\le \|\hat{\mu }_{y}^{ij}{{\|}_{2}}+\|\epsilon _{y}^{ij}{{\|}_{2}}\\
 &\le \|{{\mathbf{R}}^{*}}\hat{\mu }_{x}^{ij}{{\|}_{2}}+\sqrt{\chi _{(p)}^{2}\lambda _{y,1}^{ij}}\\
 &=\|\hat{\mu }_{x}^{ij}{{\|}_{2}}+\sqrt{\chi _{(p)}^{2}\lambda _{y,1}^{ij}} \\ 
\end{aligned}
\end{equation}

Finally, by integrating equations \eqref{eq:4} and \eqref{eq:5}, we can determine $\delta_{ij}$ by assigning a confidence probability $p$ (e.g., 90\%) to accept the observation. It is noteworthy that a larger value of $p$ implies a reduction in uncertainty.
\begin{equation}
 \left| \|\mu^{ij}_x{{\|}_{2}}-\|\mu^{ij}_y{{\|}_{2}} \right|\le \sqrt{\chi _{(p)}^{2}\lambda _{x,1}^{ij}}+\sqrt{\chi _{(p)}^{2}\lambda _{y,1}^{ij}}={{\delta }_{ij}} \\ 
\end{equation}

The comprehensive process of constructing the pyramid graph using a multi-threshold compatibility test is encapsulated in Algorithm \ref{alg:graph_build}. For $N$ correspondences, it is necessary to compute the TRIMs and their upper bounds for $N(N-1)/2$ pairs, as indicated in lines \ref{alg:trims}-\ref{alg:ub} of Algorithm \ref{alg:graph_build}. \revised{In the worst case, the time complexity reaches \(O(M \cdot N^2)\), accounting for the majority of the computational effort in the PAGOR.} To reduce the computational burden associated with eigenvalue calculations, we employ the Perron-Frobenius theorem\cite{perron1907theorie,frobenius1912matrizen}, Wolkowicz's method\cite{wolkowicz1980bounds}, and Weyl's inequality\cite{weyl1912asymptotische}. These techniques aid in computing the upper bound of the largest eigenvalue $\lambda_1$, as presented in line \ref{alg:rho_ub}, thereby accelerating the computation process.

\subsection{Graduated MAC Solver}
\label{sec:mac_solver}

Given $M$ $p$-values $p_1 \geq p_2 \geq \ldots \geq p_M$, we correspondingly have $\chi^2$-values $\chi_{(p_1)}^{2} \leq \chi_{(p_2)}^{2} \leq \ldots \leq \chi_{(p_M)}^{2}$. From these, we derive $M$ compatibility graphs $\mathcal{G}$ along with their associated maximum cliques ${\mathcal{G}^\text{MC}}$. Theoretically, as the value of $\chi_{(p)}^{2}$ increases, $\delta_{ij}$ follows the change, which results in more pairwise correspondences passing the compatibility test, thereby leading to denser compatibility graphs $\mathcal{G}$. We arrange all the consistency graphs $\mathcal{G}$ in the order of their sparsity, placing the sparsest at the top and the densest at the bottom. Our objective is to compute these $M$ maximum cliques $\mathcal{G}^\text{MC}$, where their vertices $\mathcal{I}^{*}_m$ are potential inlier sets of the initial correspondences $\mathcal{I}$. At each level of the pyramid graph, our goal is to solve the following optimization problem:

\begin{equation}
\begin{aligned}
   &\underset{{{\mathcal{I}}^{*}_m}\subseteq \{1,\ldots ,N\}}{\mathop{\operatorname{maximize}}}\, |{{\mathcal{I}}^{*}_m}|  \\
   &\text{s.t. } \left| \|\mu_x^i-\mu_x^j{{\|}_{2}}-\|\mu_y^i-\mu_y^j{{\|}_{2}} \right| < {{\delta }_{ij}^m},\forall i,j\in {{\mathcal{I}}^{*}_m}  
\end{aligned}
\end{equation}

\setcounter{theorem}{0}
\begin{proposition}[Lower Bound of Clique Cardinality]
\label{pro:maxClique}
Given $\chi_{(p_{m+1})}^{2} > \chi_{(p_{m})}^{2}$, it follows that $|{{\mathcal{G}}^{MC}_{m+1}}| \geq |{{\mathcal{G}}^{MC}_m}|$.
\end{proposition}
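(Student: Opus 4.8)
The plan is to reduce this to a purely combinatorial monotonicity statement: I will show that the pyramid graphs are nested by edge inclusion, $\mathcal{G}_m \subseteq \mathcal{G}_{m+1}$, and then invoke the elementary fact that the clique number of a graph cannot decrease when edges are added. Everything hinges on the threshold $\delta_{ij}^m$ being monotone in the confidence index $m$.

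First I would establish the monotonicity of the per-pair threshold. From its closed form
\[
\delta_{ij}^m = \sqrt{\chi_{(p_m)}^2\,\lambda_{x,1}^{ij}} + \sqrt{\chi_{(p_m)}^2\,\lambda_{y,1}^{ij}},
\]
the eigenvalue factors $\lambda_{x,1}^{ij}$ and $\lambda_{y,1}^{ij}$ are fixed nonnegative quantities determined solely by the pseudo-covariances of the four involved GEMs, and are therefore independent of $m$. Since by hypothesis $\chi_{(p_{m+1})}^2 > \chi_{(p_m)}^2$, each square-root term is nondecreasing, so $\delta_{ij}^{m+1} \geq \delta_{ij}^m$ holds for every correspondence pair $(i,j)$.

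Next I would translate this into an edge-inclusion relation. All graphs share the common vertex set $\{1,\ldots,N\}$, and by construction an edge $(i,j)$ belongs to $\mathcal{G}_m$ precisely when $d_{ij} \leq \delta_{ij}^m$, where $d_{ij}=\bigl|\,\|\mu_x^i-\mu_x^j\|_2-\|\mu_y^i-\mu_y^j\|_2\,\bigr|$ does not depend on $m$. Hence any pair satisfying $d_{ij}\leq\delta_{ij}^m$ also satisfies $d_{ij}\leq\delta_{ij}^{m+1}$, so every edge of $\mathcal{G}_m$ is an edge of $\mathcal{G}_{m+1}$; that is, $\mathcal{G}_m$ is a spanning subgraph of $\mathcal{G}_{m+1}$. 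I would then close the argument with the monotonicity of clique number under edge addition: the vertices of $\mathcal{G}_m^{MC}$ are pairwise adjacent in $\mathcal{G}_m$, and by the edge inclusion they remain pairwise adjacent in $\mathcal{G}_{m+1}$, so $\mathcal{G}_m^{MC}$ is itself a (not necessarily maximum) clique of $\mathcal{G}_{m+1}$. Consequently $\mathcal{G}_{m+1}$ contains a clique of cardinality $|\mathcal{G}_m^{MC}|$, which forces $|\mathcal{G}_{m+1}^{MC}|\geq|\mathcal{G}_m^{MC}|$.

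There is no genuine obstacle here; the result is essentially a nesting observation dressed up in the language of cliques. The only point worth a moment of care is verifying that the square-root thresholds really are monotone, which requires the eigenvalue factors to be nonnegative and $m$-independent. Both are guaranteed because $\lambda_{x,1}^{ij}$ and $\lambda_{y,1}^{ij}$ are the largest eigenvalues of the positive-definite pseudo-covariance sums $\hat{\Sigma}_x^{ij}$ and $\hat{\Sigma}_y^{ij}$ and are computed once per pair, before the confidence levels are traversed, so the sole variable quantity across levels is $\chi_{(p_m)}^2$.
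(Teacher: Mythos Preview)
Your proof is correct and follows essentially the same approach as the paper: both establish that $\delta_{ij}^{m+1}\geq\delta_{ij}^{m}$ and conclude that any clique feasible at level $m$ remains feasible at level $m+1$. The only cosmetic difference is that the paper wraps this in a contradiction argument, whereas you give the direct monotonicity-of-clique-number phrasing; your version is arguably cleaner and more explicit about why the eigenvalue factors are $m$-independent.
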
 

\begin{proof}
Proceeding by contradiction, let us assume the contrary: $\chi_{(p_{m+1})}^{2}>\chi_{(p_{m})}^{2}$, but $|{{\mathcal{G}}^{MC}_{m+1}}|<|{{\mathcal{G}}^{MC}_m}|$. Let us denote the consistency thresholds corresponding to $\chi_{(p_m)}^2$ and $\chi_{(p_{m+1})}^2$ as $\delta_{ij}^m$ and $\delta_{ij}^{m+1}$ respectively. Given that $\chi_{(p_{m+1})}^2 > \chi_{(p_m)}^2$, we can infer that $\delta_{ij}^{m+1} > \delta_{ij}^m$ for all $i,j$.

Consider the maximum cliques at levels $m$ and $m+1$:

\begin{equation}
\begin{aligned}
&|{{\mathcal{G}}^{MC}_{m}}|=|{{\mathcal{I}}^{*}_{m}}| \\
&\text{s.t. } \left| \|\mu_x^i-\mu_x^j{{\|}_{2}}-\|\mu_y^i-\mu_y^j{{\|}_{2}} \right| < \delta_{ij}^{m},~\forall i,j \in {\mathcal{I}^{*}_{m}}
\end{aligned}
\end{equation}

\begin{equation}
\begin{aligned}
&|{{\mathcal{G}}^{MC}_{m+1}}|=|{{\mathcal{I}}^{*}_{m+1}}| \\
&\text{s.t. } \left| \|\mu_x^i-\mu_x^j{{\|}_{2}}-\|\mu_y^i-\mu_y^j{{\|}_{2}} \right| < \delta_{ij}^{m+1},~\forall i,j \in {\mathcal{I}^{*}_{m+1}}
\end{aligned}
\end{equation}

Given that $\delta_{ij}^{m+1} > \delta_{ij}^m$, any feasible set for ${\mathcal{I}}^{*}_m$ must also be a feasible set for ${\mathcal{I}}^{*}_{m+1}$. Hence, the cardinality of $|{{\mathcal{G}}^{MC}_{m+1}}|$ must be greater than or equal to that of $|{{\mathcal{G}}^{MC}_{m}}|$. This contradicts our earlier assumption that $|{{\mathcal{G}}^{MC}_{m+1}}| < |{{\mathcal{G}}^{MC}_m}|$, and consequently, our proof is concluded.
\end{proof}

Proposition \ref{pro:maxClique} suggests that we can leverage the maximum clique deduced from a sparse clique to produce a lower bound on the cardinality of MAC for the denser compatibility graph of the subsequent level. This insight works well with an efficient parallel maximum clique finder algorithm, PMC \cite{rossi2015parallel}, enabling us to devise a graduated PMC algorithm that calculates the maximum clique from the top to the bottom of the pyramid compatibility graph. Specifically, the original PMC algorithm is based on a branch-and-bound method and prunes using the core numbers of the vertices. In the graduated PMC, once we have determined the maximum clique from the sparser graph of the preceding layer, we can exclude all nodes in this layer whose degree is less than the cardinality of the clique. This process substantially reduces the search space, as these nodes cannot be included in the final MAC of the current layer, according to Proposition \ref{pro:maxClique}. 

\revised{Although finding the maximum clique typically has exponential computational complexity in the worst-case scenario, this is mitigated by the sparsity of the compatibility graph under high outlier ratios. Combined with parallel computing techniques, this sparsity ensures that the maximum clique problem can be solved efficiently in practice. Moreover, as we demonstrate empirically in Section~\ref{exp:ab_study}, the integration of an additional compatibility graph does not significantly increase computation time when using the graduated MAC solver, due to the prior lower bound provided by the preceding graph.}

\section{Robust Transformation Estimation}
\label{sec:solve_verify}

In this section, we illustrate how to derive multiple transformation candidates from the $M$ hypothetical inlier sets $\mathcal{I}^{*}$ utilizing robust estimation methods (Section \ref{sec:tf_solve}). We then examine the selection of the evaluation function $g$ in Eq. \ref{eq:eval} from a robust statistics viewpoint, with the goal of identifying the most appropriate transformation from the set of candidates that can optimally align $\mathcal{X}$ and $\mathcal{Y}$ (Section \ref{sec:verify}).

\subsection{Distribution-to-Distribution Registration}
\label{sec:tf_solve}

Given a hypothetic inlier set $\mathcal{I}^{*}=(x_k,y_k)$, we reformulate the Equation \eqref{equ:sub_reg} as a distribution-to-distribution registration problem using the statiscal Gaussian parameters of GEMs. Let $\mu_x^k \in \mathcal{N}(\hat{\mu}_x^k, \Sigma_{x}^k)$ and $\mu_y^k \in \mathcal{N}(\hat{\mu}_y^k, \Sigma_{y}^k)$, Equation \eqref{equ:sub_reg} can be written as,

\begin{equation}
\label{equ:d2d}
\begin{aligned}
\mathbf{R}^*, \mathbf{t}^*&=\underset{\mathbf{R} \in \operatorname{SO}(3), \mathbf{t} \in \mathbb{R}^3}{\arg \min } \sum_{\left(\mu_x^k, \mu_y^k\right) \in \mathcal{I}^*} \min \left(r\left(\mu_y^k, \mu_x^k\right), \bar{c}^2\right) \\
&r\left(\mu_y^k, \mu_x^k\right)= d_{k}^{\mathrm{T}}\left(\Sigma_{y}^k+ \mathbf{R} \Sigma_{x}^k {\mathbf{R}}^{\mathrm{T}} \right)^{-1} d_{k}\\
&d_{k}=\mu_y^k-\left(\mathbf{R} \mu_x^k+\mathbf{t}\right) \\
\end{aligned}
\end{equation}
where $d_{k}\sim \mathcal{N}\left(\hat{\mu}^k_y-(\mathbf{R} \hat{\mu}_x^k+\mathbf{t}), \Sigma_{y}^k+ \mathbf{R}\Sigma_{x}^k {\mathbf{R}}^{\mathrm{T}} \right)$. $\rho(\cdot)$ is now defined as the Truncated Least Squares (TLS) cost, with $\bar{c}$ representing the inlier cost threshold which is initializad by $\chi_{(0.01)}^2$. The minimization of the residual function $r(\cdot)$ is equivalent to maximizing the log-likelihood of the new Gaussian model $d_{k}$. 

To solve Equation \eqref{equ:d2d}, we use graduated non-convexity\cite{yang2020graduated} implemented in GTSAM\cite{gtsam} where the non-minimal global solver is replaced by Levenberg-Marquardt optimization for the reason that there is no closed-form solution for distribution-to-distribution registration problem. In addition, for plane segments, the covariance matrix is regularized by replacing its eigenvalue with $(1, 1, 10^{-3})$ to work as a plane-to-plane registration. \revised{GNC acts as an alternative optimization with inner and outer iteration for L-M optimization and weight update, respectively.}

\revised{The time complexity of the optimization process is denoted as \(O(D^2N \cdot L)\), where \(D=6\) represents the dimension of the optimization variable, \(N\) denotes the number of inliers denoted by \(|\mathcal{I}^*|\), and \(L\) indicates the number of outer iterations. In practical applications, the GEM-based matching algorithm tends to yield a limited number of inliers, which keeps \(N\) relatively small. Additionally, the use of maximum clique inlier selection effectively eliminates the majority of outliers, resulting in a robust initial guess. This, in turn, reduces \(L\), the count of outer iterations required for convergence. Consequently, both \(N\) and \(L\) contribute to a lower practical time complexity, enhancing the overall efficiency of the optimization process, which is demonstrated in Figure \ref{figure:runtime}.}

\subsection{Transformation Verification}
\label{sec:verify}

Given M hypothetical transformations $(\mathbf{R}^{*}_{m}, \mathbf{t}^{*}_{m})$, our objective is to construct the evaluation function $g(\cdot)$ to identify the most suitable transformation by resolving Equation \eqref{eq:eval}. Whereas existing approaches endeavor to  \revised{certify the global optimality\cite{holmes2023efficient,yang2020teaser,yang2022certifiably}} of minimizing the objective function, we take a step back to re-think how to design an objective function that accurately reflects the alignment quality of the point clouds. We argue the objective in Equation \eqref{equ:d2d} may be inadequate for two reasons. First, although GEMs recover geometries of $\mathcal{X}$ and $\mathcal{Y}$ maximally, information loss persists since $\mathcal{I}^{*}$ contains only a subset of all potential correspondences. Second, $\mathcal{I}^{*}$ unavoidably contains adversarial outliers that the TLS cannot accommodate.

To address this, we leverage the semantic voxel map $V$ from Section \ref{sec:gmm_front} to verify candidates, as it retains most raw geometric information. Assuming every $(\mathbf{R}^{*}_{m}, \mathbf{t}^{*}_{m})$ is optimal, correspondences producing large residuals are considered outliers. Thus, we formulate $g(\cdot)$ based on the Chamfer distance with a new robust kernel $\hat{\rho}(\cdot)$:

\begin{equation}
\begin{aligned}
g\left(\mathbf{R}_{m}^{*}, \mathbf{t}_{m}^{*} \mid \mathcal{X}, \mathcal{Y} \right)& = \frac{1}{|\mathcal{X}|} \sum_{v_x \in V_{\mathcal{X}}} \hat{\rho}((r(v_x^{\prime},v_y^{\prime}))\\
v_x^{\prime}&=\mathbf{R}_{m}^{*} v_x + \mathbf{t}_{m}^{*}\\
v_y^{\prime}&=\underset{v_y \in V_{\mathcal{Y}}}{\arg\min} \|v_x^{\prime} - v_y\|_2
\end{aligned}
\end{equation}
where $V_{\mathcal{X}}$ and $V_{\mathcal{Y}}$ represent the voxel maps of $\mathcal{X}$ and $\mathcal{Y}$.

\revised{To expedite the computational process, we compress the raw point clouds by retaining only the sampled five points within each voxel $V_{\mathcal{X}}$ and the centroid of each voxel $V_{\mathcal{Y}}$. For each sampled point $v_x$ in $V_{\mathcal{X}}$, we determine its nearest counterpart $v_y^{\prime}$ in $V_{\mathcal{Y}}$ after transforming it with the optimal rotation $\mathbf{R}_{m}^{*}$ and translation $\mathbf{t}_{m}^{*}$. We then calculate the residual $r(v_x^{\prime}, v_y^{\prime})$ as follows:}

\begin{equation} 
\begin{cases}
|n_y^{\mathrm{T}}(v_x^{\prime}-v_y^{\prime})| & \text{if } \text{type}(v_y^{\prime}) = \text{plane} \\
\left\|\left(\mathbf{I}-d_y^{\mathrm{T}} d_y\right)(v_x^{\prime}-v_y^{\prime})\right\|_2 & \text{if } \text{type}(v_y^{\prime}) = \text{line} \\
\|v_x^{\prime}-v_y^{\prime}\|_2 & \text{if } \text{type}(v_y^{\prime}) = \text{cluster}
\end{cases}
\end{equation}

\revised{In this equation, $n_y$ denotes the normal to the plane, and $d_y$ represents the direction of the line. The nearest neighbor search leverages a KD-Tree structure, where building the tree requires a time complexity of \(O(n \log(n))\) and querying the nearest point has a complexity of \(O(m \log(n))\), with $m$ and $n$ being the numbers of sampled points in $V_{\mathcal{X}}$ and $V_{\mathcal{Y}}$, respectively. Furthermore, the verification of $(\mathbf{R}^{*}_{m}, \mathbf{t}^{*}_{m})$ will be omitted if it is too close to an already evaluated one.}

For the selection of the evaluation function $\hat{\rho}(\cdot)$, we employ Dynamic Covariance Scaling (DCS)\cite{agarwal2013robust} and compare its performance to other robust kernels, including Tukey, Cauchy, Huber\cite{zhang1997parameter}, and TLS. This comparative analysis is presented in Section \ref{exp:ab_study}.
% I want you to act as an academician. Please help me to correct the grammar error of the following paper text and rewrite these paragraphs in an academic and concise langauge:
\section{Experiments}
\label{sec:exp}

\begin{figure*}[t]
	\centering
         \subfigure[Roll ($^\circ$)]{	
		\includegraphics[width=4.2cm]{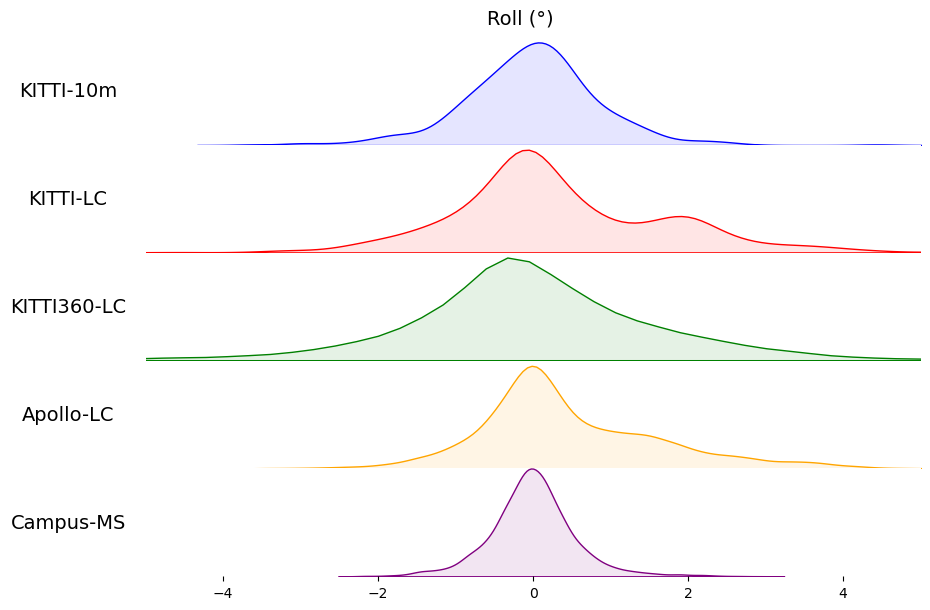}}
        \subfigure[Pitch ($^\circ$)]{	
		\includegraphics[width=4.2cm]{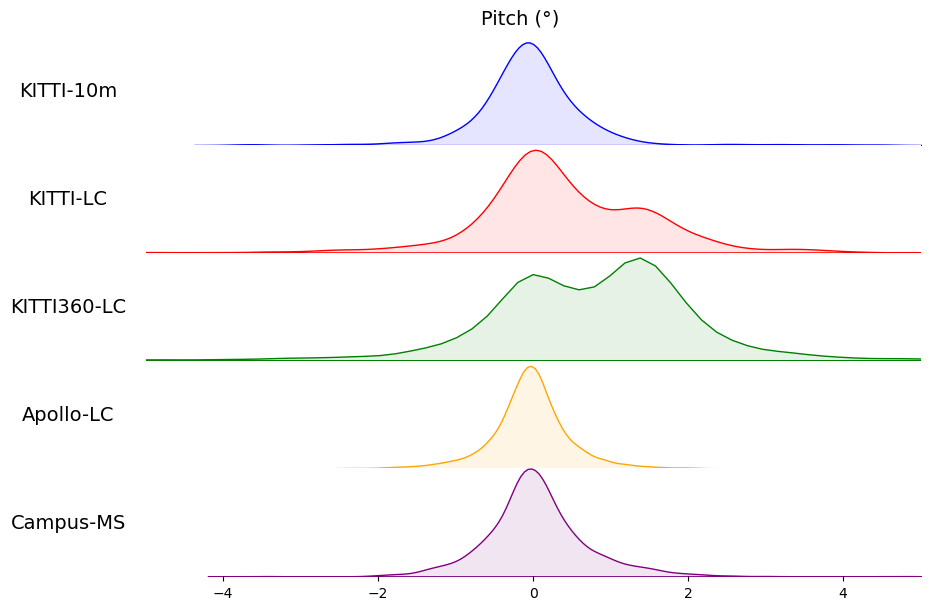}}
        \subfigure[Yaw ($^\circ$)]{	
		\includegraphics[width=4.2cm]{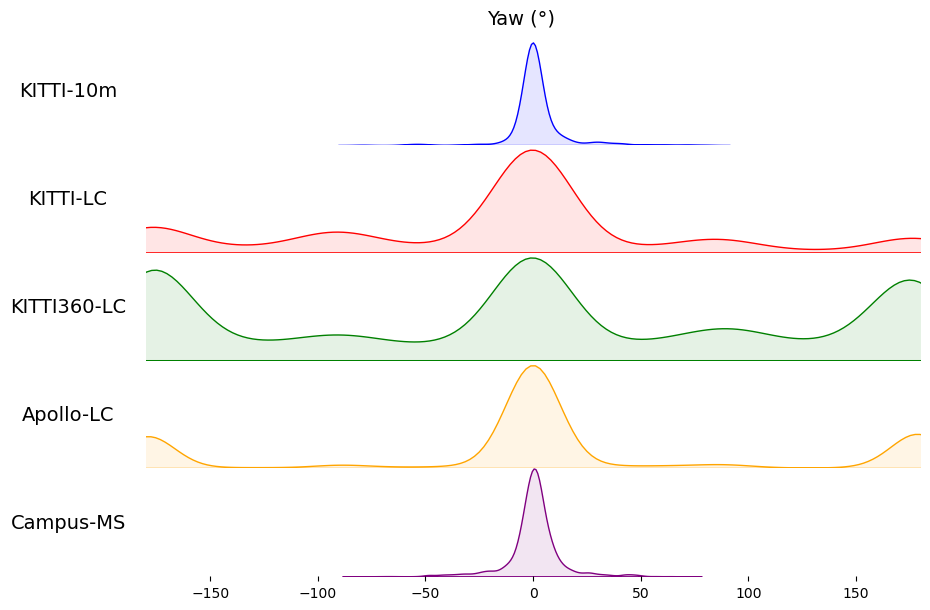}}
        \subfigure[Translation (m)]{	
		\includegraphics[width=4.2cm]{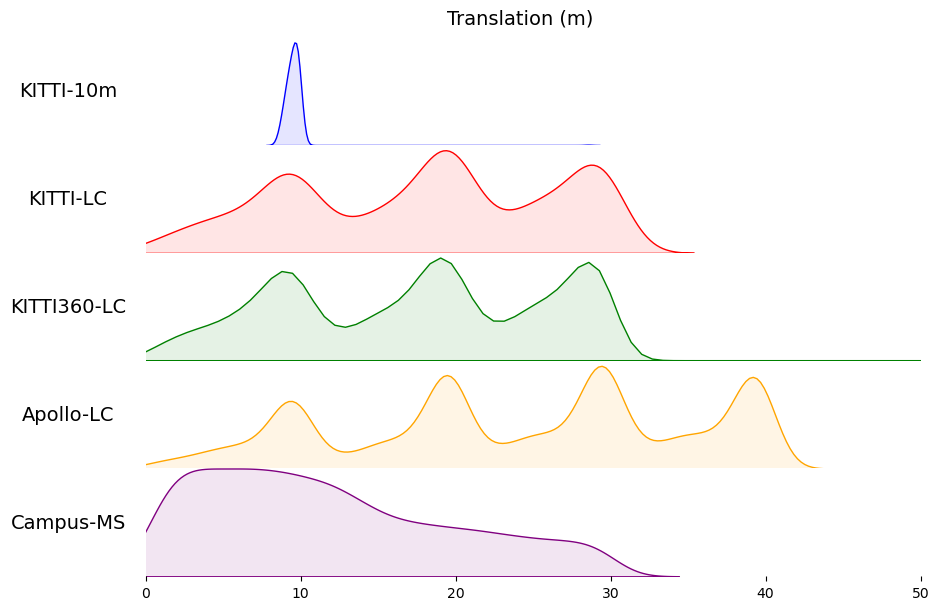}}
	\caption{The distributions of ground truth rotations and translations in different datasets. This figure shows broader distributions that can effectively validate global registration performances. }
	\label{figure:dataset_info}
\end{figure*}

This section conducts a comprehensive comparative analysis between our proposed method and existing SOTA techniques, employing various datasets and sensor types. Our method achieves better performance and delivers sub-optimal results with a diminished computational overhead. Additionally, the integration of our novel sub-modules, namely GEM and PAGOR, into any registration approach demonstrates a substantial enhancement in performance. Finally, a review of selected failure cases is also presented to provide practitioners with an in-depth understanding of our methodology.

\subsection{Experimental Setup}
\label{exp:setup}

\subsubsection{Datset}
\label{sec:dataset}

As presented in Table \ref{tab:dataset_info}, we conduct experiments on three public and one self-collected multi-session dataset, namely Campus-MS. The KITTI-10m registration set, widely utilized for the evaluation of LiDAR-based global registration techniques \cite{choy2020deep, bai2021pointdsc, zhang20233d}, is formed by sampling consecutive frames at 10-meter intervals from sequences 08, 09, and 10 of the KITTI odometry dataset. However,  this dataset's concentrated distribution of relative poses between the two point cloud frames may not comprehensively assess the true performance of global registration. 

\revised{To address this constraint, we adopt a strategy of sampling registration pairs from revisited locations, a testing approach  commonly utilized in re-localization and loop closing scenarios. This approach yields a broader distribution of poses and more pronounced disparities in point cloud data as shown in Figure \ref{figure:dataset_info}. The translation-based down-sampling is also employed to decrease the test set size, while ICP is utilized to refine the ground truth.}

\revised{The strategy outlined above is implemented to process three publicly accessible datasets, including KITTI-LC \cite{geiger2012we}, KITTI360-LC \cite{liao2022kitti}, and Apollo-LC \cite{lu2019deepvcp} (LC means loop closing). The datasets cover a wide range of scenarios pertinent to autonomous navigation, including urban centers, highways, overpasses, and suburban areas.} We have ensured the public availability of the code used to generate this benchmark dataset, with the intention of benefiting the wider community~\footnote{\href{https://github.com/HKUST-Aerial-Robotics/LiDAR-Registration-Benchmark}{https://github.com/HKUST-Aerial-Robotics/LiDAR-Registration-Benchmark}}. The self-collected Campus-MS dataset will be introduced in detail in Section~\ref{sec:real-world}.

\subsubsection{Comparisons}
Concerning the front end, we employ two approaches: the handcrafted descriptor FPFH\cite{rusu2009fast} and the deep learning-based descriptor FCGF\cite{choy2019fully}. \revised{As for the back end, our selection encompasses SOTA methods based on different frameworks: (1) graph-theoretic methods: TEASER++\cite{yang2020teaser}, Quatro\cite{lim2022single}, and 3DMAC\cite{zhang20233d}; (2) deep learning-based methods: DGR\cite{choy2020deep} and PointDSC\cite{bai2021pointdsc}; (3) enhanced RANSAC; (4) advanced loop closing methods that integrate global registration, like STD\cite{yuan2023std}, Cont2\cite{jiang2023contour}.}

\subsubsection{Evaluation metric}
\begin{table}[!]
\centering
\caption{Datasets for Performance Evaluaiton}
\renewcommand{\arraystretch}{1}
\resizebox{\columnwidth}{!}{%
\begin{tabular}{cclllclllccc}
\hline
              & \multicolumn{4}{c}{KITTI-10m}   & \multicolumn{4}{c}{KITTI-loop}  & KITTI-360   & Apollo-SouthBay & Self-collected \\ \hline
Num. of Pairs & \multicolumn{4}{c}{556}         & \multicolumn{4}{c}{3325}        & 18469       & 55118           & 4707           \\
Testing Strategy          & \multicolumn{4}{c}{Loop Clouse} & \multicolumn{4}{c}{Loop Clouse} & Loop Clouse & Loop Clouse     & Multi Sessions \\
Sensor Type         & \multicolumn{4}{c}{Velodyne} & \multicolumn{4}{c}{Velodyne} & Velodyne & Velodyne     & Livox  \\ \hline
\end{tabular}%
}
\label{tab:dataset_info}
\end{table}
\begin{table}[t]
\caption{All Parameters of Our Approach}
\label{tab:paras}
\renewcommand{\arraystretch}{1}
\centering
\begin{tabular}{ccc|c}
\hline
\multicolumn{3}{c|}{Parameters}                                            & Value             \\ \hline
\multicolumn{1}{c|}{\multirow{4}{*}{Plane}} & \multicolumn{1}{c|}{\multirow{2}{*}{Extraction}} & Voxel size $s_v$       & 1 m   \\
\multicolumn{1}{c|}{}   & \multicolumn{1}{c|}{}   & $\lambda_2/\lambda_3$ threshold $\sigma_{p}$    & 30   \\ \cline{2-4} 
\multicolumn{1}{c|}{}   & \multicolumn{1}{c|}{\multirow{2}{*}{Merge}}      & Normal threshold $\sigma_{n}$   & 0.95  \\
\multicolumn{1}{c|}{}   & \multicolumn{1}{c|}{}   & Distance threshold $\sigma_{p2n}$    & 0.2      \\ \hline
\multicolumn{2}{c|}{\multirow{2}{*}{Line extraction}}   & Distance threshold $\sigma_{p2l}$ & 0.5 m \\
\multicolumn{2}{c|}{}                             & Inlier ratio threshold $\sigma_{line}$ & 0.5               \\ \hline
\multicolumn{2}{c|}{\multirow{2}{*}{Association}} & Segment number $J$        & 50                \\
\multicolumn{2}{c|}{}                             & $K$ in MKNN strategy     & 20                \\ \hline
\multicolumn{2}{c|}{MAC solver}                   & p-values        & [0.99,0.95,0.9,0.8] \\ \hline
\multicolumn{2}{c|}{Geometric verification}       & Robust kernel function         & DCS               \\ \hline
\end{tabular}
\end{table}
% Please add the following required packages to your document preamble:
% \usepackage{multirow}
% \usepackage{graphicx}
% \usepackage[table,xcdraw]{xcolor}
% Beamer presentation requires \usepackage{colortbl} instead of \usepackage[table,xcdraw]{xcolor}
% \usepackage[normalem]{ulem}
% \useunder{\uline}{\ul}{}
\begin{table*}[]
\centering
\caption{\revised{Feature Matching Evaluation on KITTI-LC Dataset}}
\label{tab:matching}
\resizebox{\textwidth}{!}{%
\begin{threeparttable}
\begin{tabular}{c|c|cccc|cccc|cccc}
\hline
 &  & \multicolumn{4}{c|}{0-10m} & \multicolumn{4}{c|}{10-20m} & \multicolumn{4}{c}{20-30m} \\ \cline{3-14} 
\multirow{-2}{*}{} & \multirow{-2}{*}{Method} & IR\tnote{1} (\%) $\uparrow$ & Num & Recall (\%)$\uparrow$ & Time (ms) $\downarrow$ & IR (\%)$\uparrow$ & Num & Recall (\%)$\uparrow$ & Time (ms)$\downarrow$ & IR (\%)$\uparrow$ & Num & Recall (\%)$\uparrow$ & Time (ms)$\downarrow$ \\ \hline
 & FPFH & {\ul 9.198} & 642 & \textbf{99.89} & {\color[HTML]{FE0000} 170.69} & {\ul 2.575} & 590 & 94.52 & {\color[HTML]{FE0000} 177.76} & {\ul 0.923} & 557 & 64.44 & {\color[HTML]{FE0000} 175.16} \\
\multirow{-2}{*}{Point-based} & FCGF & \textbf{29.08} & 2762 & 96.72 & 96.21 & \textbf{13.12} & 2586 & 87.84 & 86.52 & \textbf{5.678} & 2403 & 71.59 & 84.12 \\ \hline
 & Plane only & 1.109 & 450 & 73.08 & {\ul 21.69} & 0.580 & 447 & 42.83 & {\ul 21.87} & 0.337 & 441 & 18.73 & {\ul 22.11} \\
 & Line only & 2.678 & 180 & 56.45 & \textbf{20.65} & 1.454 & 185 & 34.23 & \textbf{21.14} & 0.741 & 175 & 17.38 & \textbf{21.25} \\
 & Cluster only & 1.677 & 713 & 98.03 & 22.76 & 0.744 & 714 & 81.49 & 22.77 & 0.356 & 712 & 43.65 & 23.87 \\
 & Ours (Random) & 0.618 & 1737 & 95.18 & 24.43 & 0.342 & 1753 & 80.19 & 24.57 & 0.203 & 1738 & 53.25 & 25.02 \\
 & Ours (IoU3D) & 0.418 & 1379 & 81.62 & 24.97 & 0.270 & 1381 & 61.16 & 24.83 & 0.175 & 1361 & 36.19 & 25.47 \\
 & Ours (EigenVal) & 1.159 & 1379 & 98.25 & 24.98 & 0.562 & 1377 & 91.57 & 25.12 & 0.302 & 1356 & 65.24 & 25.30 \\
 & Ours (FPFH+T) & 1.330 & 1196 & 98.14 & 45.38 & 0.617 & 1203 & 89.23 & 42.58 & 0.323 & 1189 & 60.24 & 43.34 \\
 & Ours (All-to-all) & 0.692 & 3598 & {\ul 99.02} & 24.41 & 0.360 & 3626 & \textbf{96.35} & 24.50 & 0.199 & 3581 & \textbf{85.08} & 25.77 \\
 & Ours (Wass+D) & 1.376 & 1292 & 96.94 & 25.12 & 0.687 & 1291 & 92.61 & 25.10 & 0.376 & 1273 & 71.75 & 25.27 \\
\multirow{-10}{*}{Segment-based} & Ours (Wass+T) & 1.526 & 1336 & 98.69 & 25.02 & 0.737 & 1339 & {\ul 94.53} & 24.80 & 0.391 & 1320 & {\ul 74.52} & 25.17 \\ \hline
\end{tabular}%
\begin{tablenotes}
\item[1] Inlier ratio.
\end{tablenotes}
\end{threeparttable}
}
\end{table*}

\revised{To conduct a thorough evaluation of performance across different levels of difficulty and variations in point cloud distribution, we categorize the test set into three subsets. These subsets are defined by the translational distance between pairs: $[0,10]$ for easy scenarios, $[10,20]$ for medium complexity, and $[20,30]$ for hard cases. These intervals are particularly relevant for robotic applications such as loop closure \cite{uy2018pointnetvlad, liu2019lpd}.}

\revised{For feature matching evaluation in Section \ref{exp:front_end},} the inlier ratio (IR), total number of correspondences, correspondence recall, and time consumption are used. A correspondence $(x_i,y_i)$ is classified as an inlier if it adheres to the following condition:
\begin{equation}
    \|\mathbf{R}_{gt} x_i + \mathbf{t}_{gt}-y_i\|_2 < 0.5
\end{equation}
where $(\mathbf{R}_{gt}, \mathbf{t}_{gt})$ denote the ground truth rotation and translation, respectively. If the count of inliers exceeds 3, that pair is considered successfully recalled. Subsequently, we compute the recall rate for all pairs in the test set.

\revised{For the global registration evaluation in Section \ref{exp:registration}, we use registration recall for evaluation.} Given the estimated transformation $(\mathbf{R}_{est}, \mathbf{t}_{est})$, the registration is classified as successfully recalled if
\begin{equation}
\begin{aligned}
    \arccos \left(\frac{\operatorname{tr}(\mathbf{R}_{est}^{T}\mathbf{R}_{gt})-1}{2}\right) < 5\deg\\
    \| \mathbf{R}_{est}^{T}(t_{gt}-t_{est}) \|_2 < 2\text{m}
\end{aligned}
\end{equation}

The rotation and translation thresholds of 5 degrees and 2 meters, respectively, are consistent with the configurations used by other state-of-the-art methods\cite{choy2020deep,yin2023segregator,zhang20233d}. This metric setting is aligned with the fact that global registration methods typically prioritize robustness, while local registration methods like iterative closest point (ICP)\cite{besl1992method} and normal distribution transformation (NDT)\cite{magnusson2007scan}, emphasize the accuracy of point cloud registration.

\subsubsection{Implementation details}
\label{sec:impl}

\revised{All the experiments are evaluated on an Intel i7-13700K CPU and an Nvidia RTX4080 graphics card. All parameters in GEM and PAGOR are detailed in Table \ref{tab:paras}. Additionally, we maintain uniformity in parameter across all experiments, even when the LiDAR type and scenarios vary. For ground removal and clustering, we adopt the default parameters as outlined in their respective original papers\cite{oh2022travel,zhou2021t}.}

\revised{We optimized FPFH-based matching for both efficiency and effectiveness by performing ground removal, voxel downsampling at a 0.5m resolution, and adjusting the search radii for normals and FPFH to 1m and 2.5m, respectively. For FCGF matching, we used pre-trained models from the KITTI dataset's open-source code. Correspondences of both FPFH and FCGF are filtered by the mutual nearest neighbors (MNN) strategy. In graph-theoretical methods, we fine-tuned key parameters to optimize performance: noise bounds are set at 0.4 m for TEASER++ and Quatro, and the compatibility threshold is 0.9 for 3DMAC. For RANSAC, we use a default of 1,000,000 iterations and employ TRIMs with a high threshold to efficiently filter out incorrect correspondences. Multi-processing acceleration is used for TEASER++, Quatro, 3DMAC and RANSAC.}

\subsection{Feature Matching Evaluation}
\label{exp:front_end}

% Please add the following required packages to your document preamble:
% \usepackage{multirow}
% \usepackage{graphicx}
% \usepackage[normalem]{ulem}
% \useunder{\uline}{\ul}{}
\begin{table}[]
\centering
\caption{Global Registration Evaluation on KITTI-10m Dataset}
\label{tab:kitti_10m}
\resizebox{\columnwidth}{!}{%
\begin{tabular}{ccccccc}
\hline
\multirow{2}{*}{Front-end} & \multirow{2}{*}{Back-end} & \multicolumn{4}{c}{Recall (\%) on KITTI-10m $\uparrow$} & \multirow{2}{*}{\begin{tabular}[c]{@{}c@{}}Average\\ Time (ms)\end{tabular}} \\ \cline{3-6}
 &  & 08 & 09 & 10 & Total &  \\ \hline
FCGF & DGR & 96.74 & 99.38 & 97.70 & 97.66 & 573.34 \\
FCGF & PointDSC & 98.37 & 99.38 & {\ul 98.85} & 98.74 & 333.78 \\ \hline
FPFH & RANSAC & 97.39 & \textbf{100.0} & 95.40 & 97.84 & 264.31 \\
FPFH & TEASER & 98.04 & 98.76 & 93.10 & 97.48 & 236.35 \\
FPFH & 3DMAC & 95.44 & 98.15 & 90.80 & 95.50 & 258.29 \\
FPFH & PAGOR & \textbf{99.67} & \textbf{100.0} & \textbf{100.0} & \textbf{99.82} & 282.38 \\ \hline
GEM & RANSAC & 83.39 & 86.42 & 70.11 & 82.19 & 64.68 \\
GEM & TEASER & {\ul 99.35} & {\ul 99.38} & 94.25 & 98.56 & \textbf{45.55} \\
GEM & 3DMAC & 99.02 & \textbf{100.0} & 93.10 & 98.38 & {\ul 55.05} \\
GEM & PAGOR & 99.34 & \textbf{100.0} & {\ul 98.85} & {\ul 99.46} & 61.88 \\ \hline
\end{tabular}%
}
\end{table}
\revised{We assess the quality of putative correspondences from our GEM-based front end against FPFH\cite{rusu2009fast} and FCGF\cite{choy2019fully}, using the KITTI-LC dataset. Our evaluation explores various aspects of our method, considering segment types, matching strategies, and clustering algorithms as follows:}

\revised{
\begin{itemize}
    \item Use of individual segment types (plane, cluster, line).
    \item Replacement of the covariance matrix's Wasserstein distance with alternative matchings, such as IoU3D, the eigenvalue-based descriptor\cite{weinmann2014semantic} (EigenVal), matching randomly (Random), and enumerating all possible correspondences (All-to-all). Furthermore, we try to follow FPFH to design a GEM descriptor for encoding the local information of the neighborhood (20m) using GEM's centers and normals (FPFH+T).
    \item Use of different clustering methods: DCVC\cite{zhou2021t} (Wass+D) and TRAVEL\cite{oh2022travel} (Wass+T).
\end{itemize}}

\revised{Table \ref{tab:matching} summarizes our findings. Segment-based matching tends to produce more outliers than point-based due to its less robust descriptors and difficulties in estimating segment centers. However, our PAGOR algorithm mitigates this issue, effectively handling high outlier ratios, which indicates that the recall rate is more important than inlier ratio. Additionally, as the translation increases, the segment-based method exhibits superior recall compared to the descriptor-based approach like FPFH and FCGF, since the change of neighborhood damage the descriptor performance as discussed in Section \ref{sec:rw:des}. For efficiency, we employ the MKNN strategy over the all-to-all strategy, which balances performance with computational practicality. Table~\ref{tab:matching} also presents the contribution of any individual geometry type. Moreover, Wasserstein distance-based matching proves to be the most effective, aided by our MKNN strategy. Finally, TRAVEL clustering slightly outperforms DCVC, but the latter's fewer parameters may be preferable depending on the application context. We use DCVC for solid-state LiDAR and TRAVEL for mechanical LiDAR.}
\begin{figure*}[t]
	\centering
	\subfigure[KITTI-LC (0.45\%)]{	
		\includegraphics[width=0.45\textwidth]{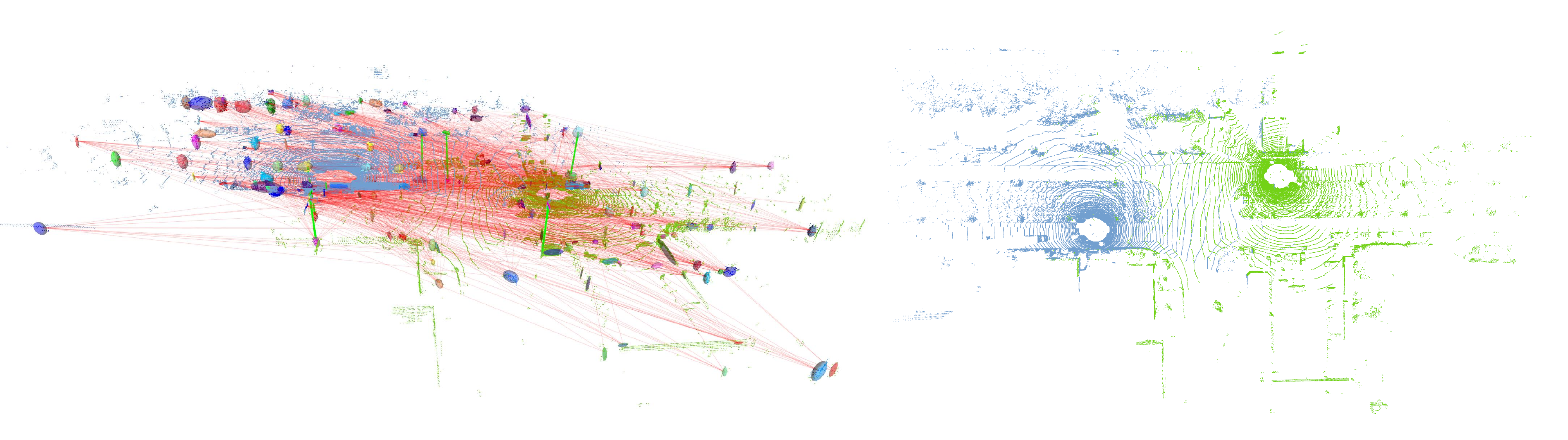}}
	\subfigure[KITTI360-LC (0.75\%)]{	
		\includegraphics[width=0.45\textwidth]{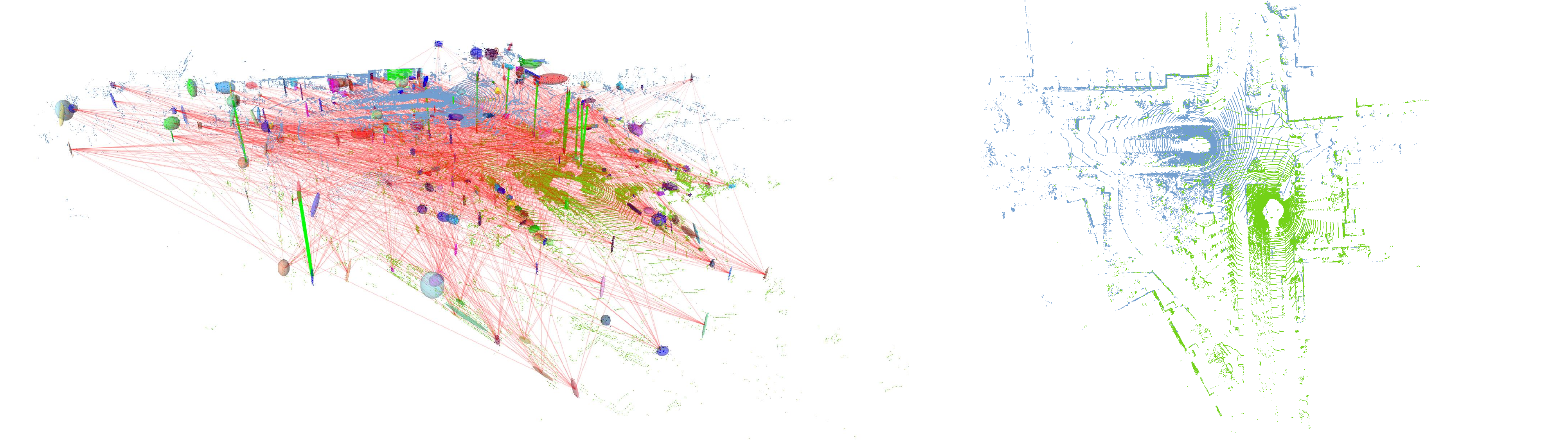}}
        \subfigure[KITTI-LC (0.69\%)]{	
		\includegraphics[width=0.45\textwidth]{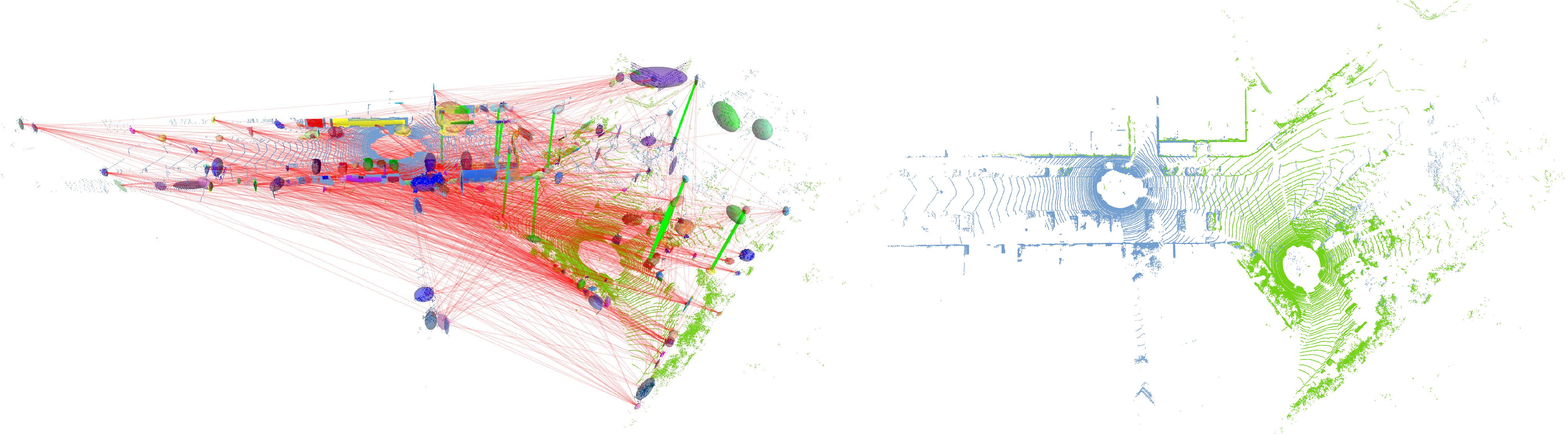}}
	\subfigure[KITTI-LC (1.93\%)]{	
		\includegraphics[width=0.45\textwidth]{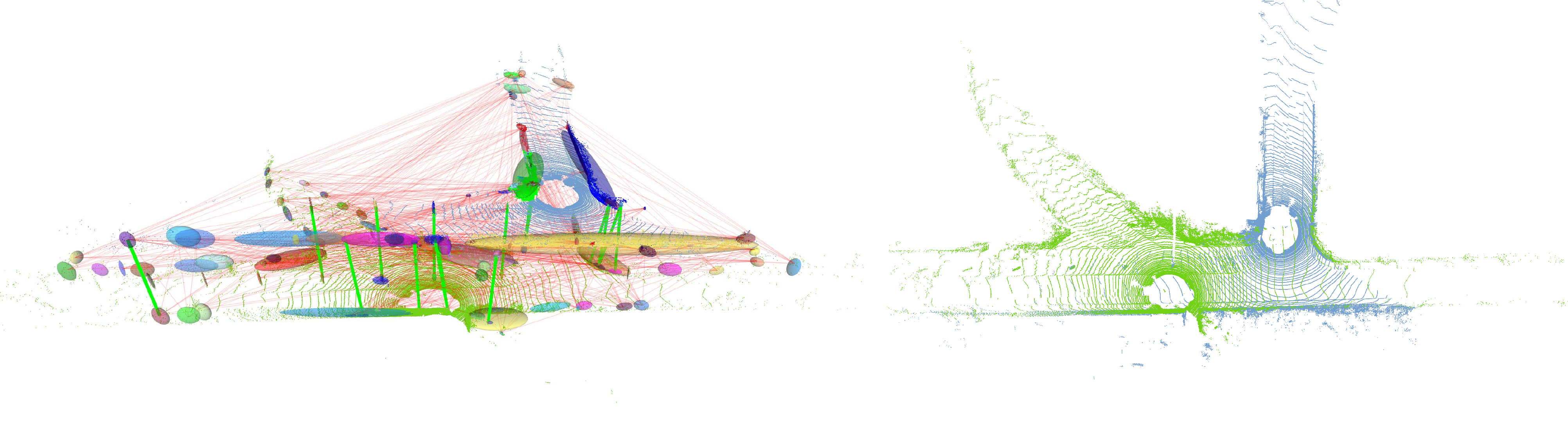}}
        \subfigure[Apollo-LC (0.77\%)]{	
		\includegraphics[width=0.45\textwidth]{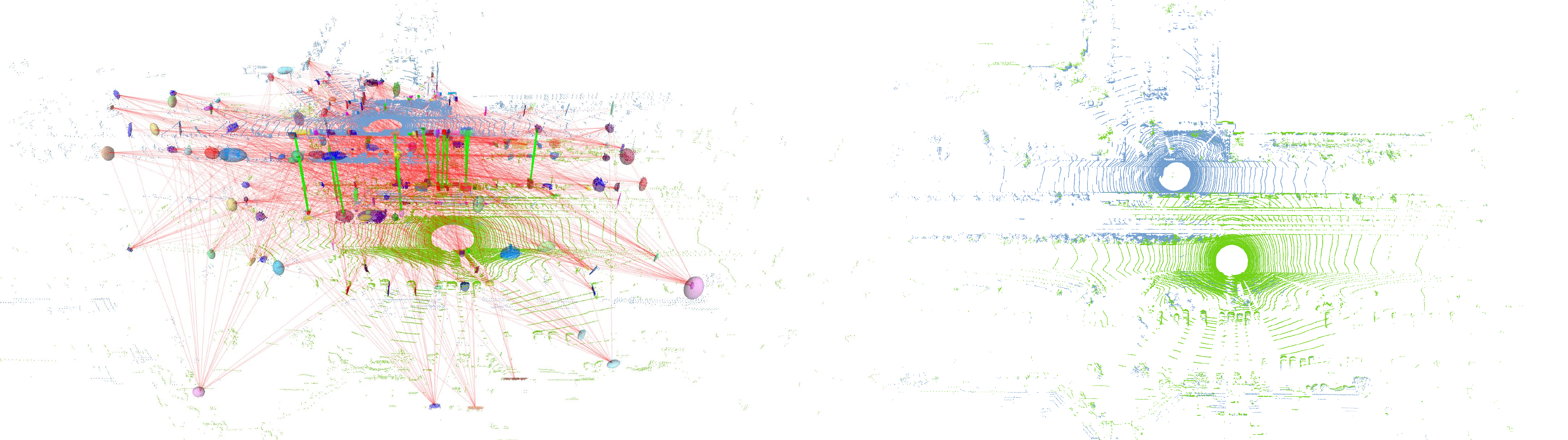}}
	\subfigure[Apollo-LC (1.48\%)]{	
		\includegraphics[width=0.45\textwidth]{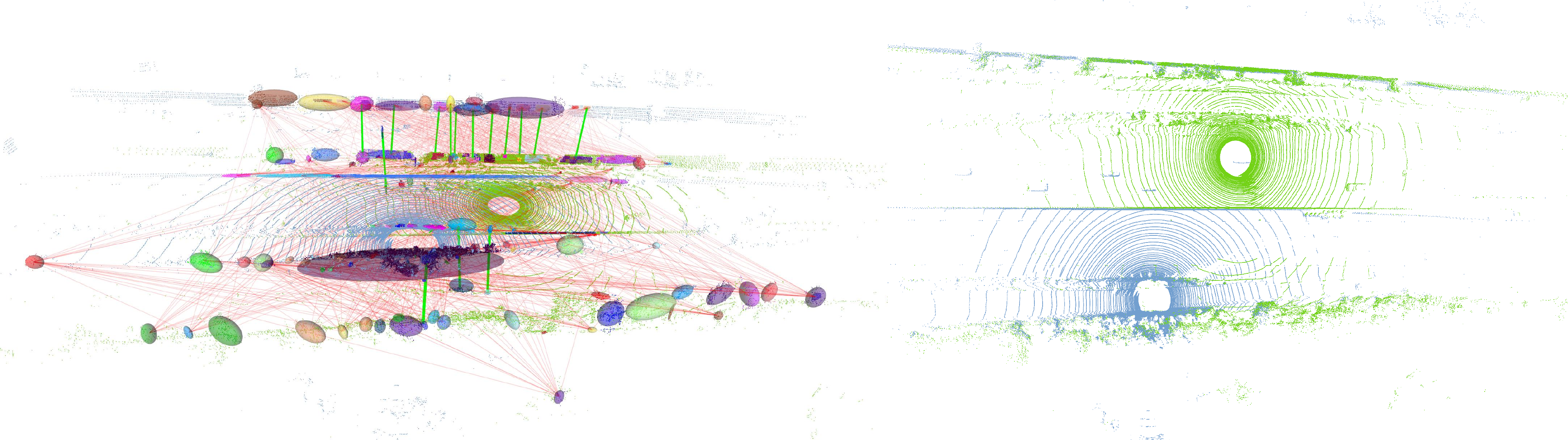}}
        \subfigure[Campus-MS (1.59\%)]{	
		\includegraphics[width=0.45\textwidth]{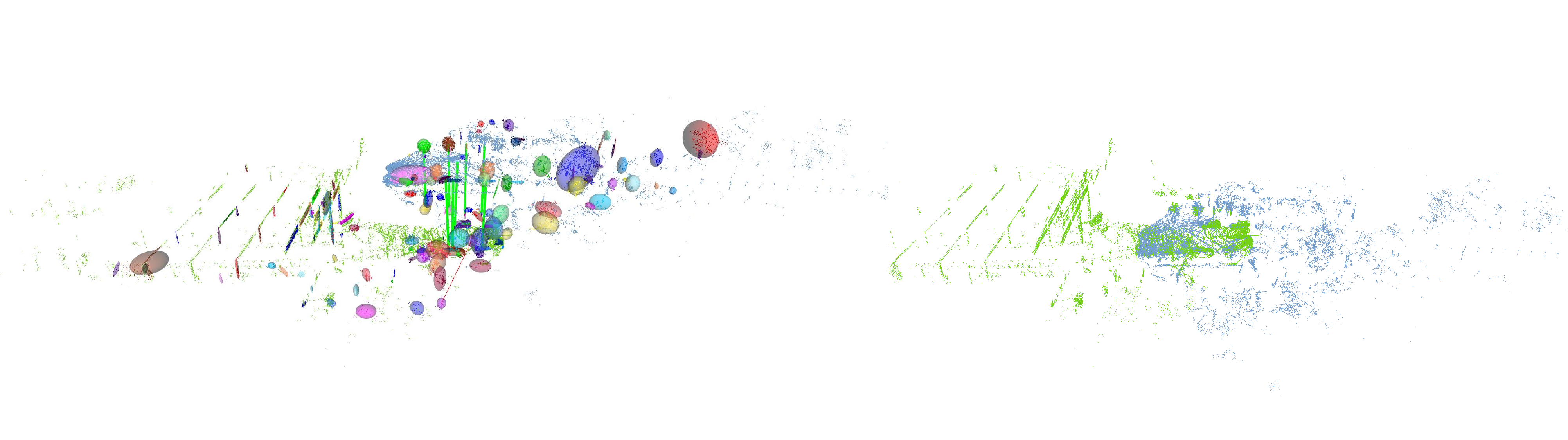}}
	\subfigure[Campus-MS (1.11\%)]{	
		\includegraphics[width=0.45\textwidth]{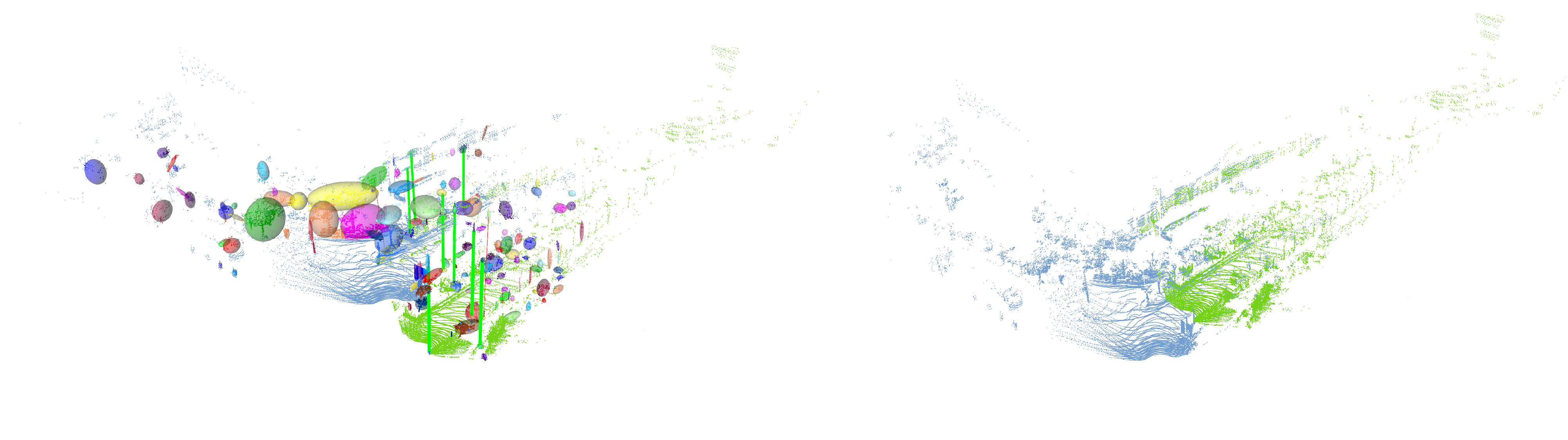}}
	\caption{Challenging cases for global registration where G3Reg could achieve successful alignment even when point clouds exhibited low inlier ratios (shown in brackets). (a)-(f) were tested with Velodyne data and (g) and (h) show registration results using Livox.}
    \label{figure:visualization}
\end{figure*}
\begin{figure}[h]
	\centering
	\subfigure{	
		\includegraphics[width=0.9\columnwidth]{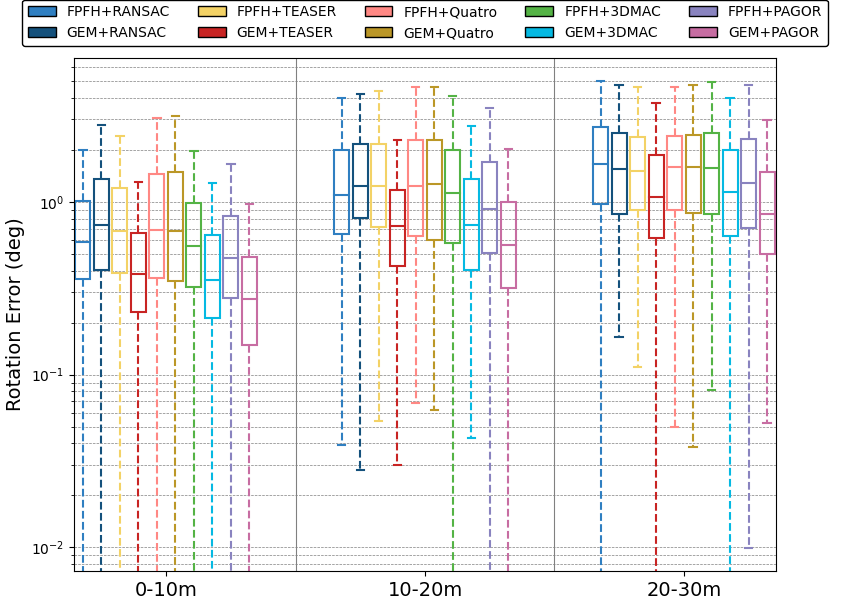}}
	\subfigure{	
		\includegraphics[width=0.9\columnwidth]{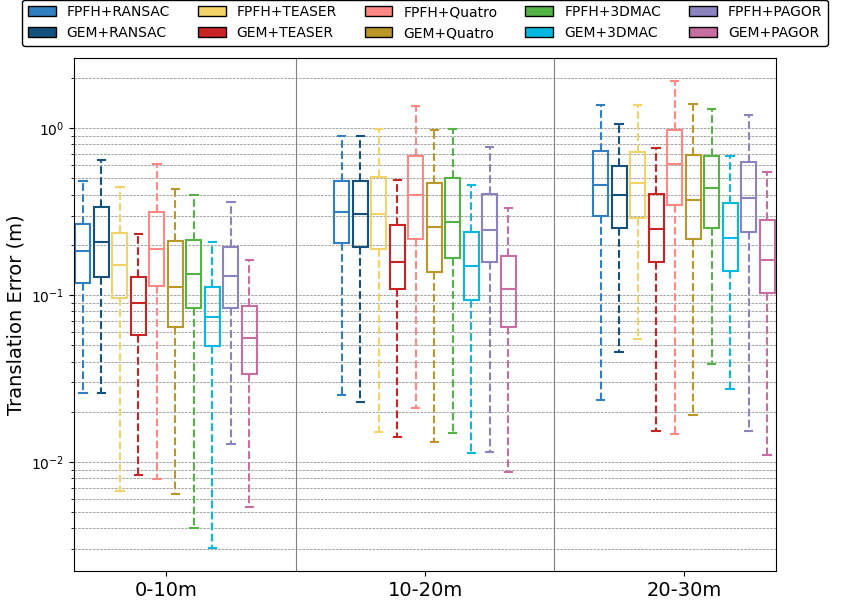}}
	\caption{\revised{Registration accuracy results. Boxplots of rotation errors and translation errors for the ten compared methods on the KITTI-LC dataset.}}
    \label{figure:tf_err}
\end{figure} 

\subsection{Registration Evaluation}
\label{exp:registration}

% Please add the following required packages to your document preamble:
% \usepackage{multirow}
% \usepackage{graphicx}
% \usepackage[normalem]{ulem}
% \useunder{\uline}{\ul}{}
\begin{table*}[]
\centering
\caption{\revised{Global Registration Evaluation on KITTI-LC Dataset}}
\label{tab:kitti_bm}
\resizebox{\textwidth}{!}{%
\begin{tabular}{c|c|c|ccccccccccccccc|c}
\hline
\multirow{3}{*}{} & \multirow{3}{*}{Front-end} & \multirow{3}{*}{Back-end} & \multicolumn{15}{c|}{Recall (\%) on KITTI-LC $\uparrow$} & \multirow{3}{*}{\begin{tabular}[c]{@{}c@{}}Average\\ Time (ms)\end{tabular}} \\ \cline{4-18}
 &  &  & \multicolumn{5}{c|}{0-10m} & \multicolumn{5}{c|}{10-20m} & \multicolumn{5}{c|}{20-30m} &  \\ \cline{4-18}
 &  &  & 00 & 02 & 05 & 06 & \multicolumn{1}{c|}{08} & 00 & 02 & 05 & 06 & \multicolumn{1}{c|}{08} & 00 & 02 & 05 & 06 & 08 &  \\ \hline
\multicolumn{1}{l|}{\multirow{2}{*}{\begin{tabular}[c]{@{}l@{}}Deep Learning-\\ based Method\end{tabular}}} & FCGF & DGR & 88.12 & 65.12 & 80.11 & 98.45 & \multicolumn{1}{c|}{2.24} & 64.81 & 54.04 & 66.83 & 48.21 & \multicolumn{1}{c|}{0.62} & 41.47 & 35.43 & 45.21 & 27.38 & 0.00 & 546.85 \\
\multicolumn{1}{l|}{} & FCGF & PointDSC & 82.18 & 54.07 & 75.57 & 97.67 & \multicolumn{1}{c|}{2.99} & 61.58 & 46.46 & 62.81 & 47.81 & \multicolumn{1}{c|}{0.00} & 44.88 & 39.46 & 47.49 & 35.32 & 0.00 & 394.17 \\ \hline
\multirow{2}{*}{\begin{tabular}[c]{@{}c@{}}RANSAC-based \\ Method\end{tabular}} & FPFH & RANSAC & {\ul 99.67} & 94.76 & {\ul 99.43} & {\ul 99.22} & \multicolumn{1}{c|}{\textbf{100.0}} & 73.90 & 67.67 & 77.89 & 95.62 & \multicolumn{1}{c|}{91.97} & 25.46 & 29.59 & 32.87 & 78.17 & 58.92 & 230.54 \\
 & GEM & RANSAC & 87.46 & 77.91 & 86.36 & 96.90 & \multicolumn{1}{c|}{88.06} & 35.78 & 43.94 & 40.70 & 60.56 & \multicolumn{1}{c|}{56.79} & 4.72 & 14.80 & 10.96 & 24.60 & 29.19 & 60.91 \\ \hline
\multirow{2}{*}{\begin{tabular}[c]{@{}c@{}}SOTA Loop \\ Closure Method\end{tabular}} & STD & RANSAC & 38.61 & 36.63 & 31.25 & 73.64 & \multicolumn{1}{c|}{35.82} & 0.59 & 1.51 & 0.50 & 19.12 & \multicolumn{1}{c|}{4.93} & 0.00 & 0.00 & 0.00 & 4.76 & 0.54 & 46.70 \\
 & Cont2 & GMM & 91.75 & 80.81 & 96.02 & 96.90 & \multicolumn{1}{c|}{92.54} & 53.66 & 40.40 & 62.81 & 68.92 & \multicolumn{1}{c|}{61.73} & 13.38 & 8.07 & 19.63 & 38.89 & 29.19 & \textbf{5.15} \\ \hline
\multirow{8}{*}{\begin{tabular}[c]{@{}c@{}}Graph-theoratic\\ Method\end{tabular}} & FPFH & TEASER++ & 99.01 & 97.67 & 98.86 & \textbf{100.0} & \multicolumn{1}{c|}{\textbf{100.0}} & 78.59 & 69.70 & 83.42 & {\ul 99.60} & \multicolumn{1}{c|}{95.68} & 30.71 & 30.04 & 37.44 & 87.70 & 56.76 & 207.34 \\
 & FPFH & Quatro & 98.68 & {\ul 99.42} & \textbf{100.0} & \textbf{100.0} & \multicolumn{1}{c|}{{\ul 99.25}} & 89.74 & {\ul 75.25} & 90.95 & 99.20 & \multicolumn{1}{c|}{94.44} & 44.62 & 33.91 & 50.68 & 92.06 & 61.62 & 207.24 \\
 & FPFH & 3DMAC & 99.01 & 93.60 & 98.30 & {\ul 99.22} & \multicolumn{1}{c|}{{\ul 99.25}} & 66.57 & 50.51 & 65.83 & 88.84 & \multicolumn{1}{c|}{84.57} & 14.70 & 19.73 & 21.92 & 63.89 & 41.08 & 203.51 \\
 & FPFH & PAGOR & \textbf{100.0} & \textbf{100.0} & \textbf{100.0} & \textbf{100.0} & \multicolumn{1}{c|}{\textbf{100.0}} & 92.67 & \textbf{86.36} & 90.95 & \textbf{100.0} & \multicolumn{1}{c|}{\textbf{99.38}} & 46.46 & 43.95 & 53.42 & 96.03 & {\ul 73.51} & 283.72 \\
 & GEM & TEASER++ & \textbf{100.0} & 86.05 & \textbf{100.0} & \textbf{100.0} & \multicolumn{1}{c|}{\textbf{100.0}} & 92.67 & 60.10 & 92.46 & \textbf{100.0} & \multicolumn{1}{c|}{96.91} & 44.09 & 44.39 & 61.64 & {\ul 97.22} & 67.03 & 43.67 \\
 & GEM & Quatro & 99.01 & 87.21 & \textbf{100.0} & \textbf{100.0} & \multicolumn{1}{c|}{\textbf{100.0}} & {\ul 93.84} & 62.12 & {\ul 95.48} & 99.20 & \multicolumn{1}{c|}{96.91} & {\ul 47.24} & {\ul 46.19} & {\ul 65.30} & 95.63 & 70.81 & {\ul 42.25} \\
 & GEM & 3DMAC & \textbf{100.0} & 74.42 & \textbf{100.0} & \textbf{100.0} & \multicolumn{1}{c|}{98.51} & 86.51 & 56.57 & 87.44 & 99.20 & \multicolumn{1}{c|}{88.27} & 38.58 & 34.53 & 48.40 & 91.67 & 60.54 & 60.10 \\
 & GEM & PAGOR & \textbf{100.0} & \multicolumn{1}{l}{91.28} & \textbf{100.0} & \textbf{100.0} & \multicolumn{1}{c|}{\textbf{100.0}} & \multicolumn{1}{l}{\textbf{96.77}} & 70.20 & \multicolumn{1}{l}{\textbf{98.49}} & \textbf{100.0} & \multicolumn{1}{l|}{{\ul 97.53}} & \multicolumn{1}{l}{\textbf{62.47}} & \multicolumn{1}{l}{\textbf{52.02}} & \multicolumn{1}{l}{\textbf{74.43}} & \textbf{99.60} & \textbf{78.38} & 64.99 \\ \hline
\end{tabular}%
}
\end{table*}
% Please add the following required packages to your document preamble:
% \usepackage{multirow}
% \usepackage{graphicx}
% \usepackage[normalem]{ulem}
% \useunder{\uline}{\ul}{}
\begin{table*}[]
\centering
\caption{\revised{Global Registration Evaluation on KITTI360-LC Dataset}}
\label{tab:kitti360_bm}
\resizebox{\textwidth}{!}{%
\begin{tabular}{c|c|c|cccccccccccccccccc|c}
\hline
\multirow{3}{*}{} & \multirow{3}{*}{Front-end} & \multirow{3}{*}{Back-end} & \multicolumn{18}{c|}{Recall (\%) on KITTI360-LC $\uparrow$} & \multirow{3}{*}{\begin{tabular}[c]{@{}c@{}}Average\\ Time (ms)\end{tabular}} \\ \cline{4-21}
 &  &  & \multicolumn{6}{c|}{0-10m} & \multicolumn{6}{c|}{10-20m} & \multicolumn{6}{c|}{20-30m} &  \\ \cline{4-21}
 &  &  & 00 & 02 & 04 & 05 & 06 & \multicolumn{1}{c|}{09} & 00 & 02 & 04 & 05 & 06 & \multicolumn{1}{c|}{09} & 00 & 02 & 04 & 05 & 06 & 09 &  \\ \hline
\multirow{2}{*}{\begin{tabular}[c]{@{}c@{}}Deep Learning-\\ based Method\end{tabular}} & FCGF & DGR & 40.05 & 26.32 & 10.93 & 16.28 & 43.43 & \multicolumn{1}{c|}{55.21} & 24.70 & 19.19 & 9.37 & 14.05 & 32.55 & \multicolumn{1}{c|}{42.28} & 16.83 & 14.23 & 8.10 & 10.68 & 19.54 & 22.24 & 437.15 \\
 & FCGF & PointDSC & 40.58 & 24.56 & 9.90 & 14.95 & 40.83 & \multicolumn{1}{c|}{54.37} & 23.67 & 16.91 & 9.10 & 13.14 & 29.04 & \multicolumn{1}{c|}{39.63} & 16.91 & 13.17 & 7.86 & 10.96 & 19.01 & 25.92 & 335.86 \\ \hline
\multirow{2}{*}{\begin{tabular}[c]{@{}c@{}}RANSAC-based \\ Method\end{tabular}} & FPFH & RANSAC & 98.82 & 98.25 & 98.08 & 93.19 & 97.28 & \multicolumn{1}{c|}{99.23} & 85.03 & 84.75 & 73.09 & 70.54 & 65.40 & \multicolumn{1}{c|}{75.86} & 48.51 & 43.77 & 30.12 & 35.92 & 30.91 & 28.58 & 203.58 \\
 & GEM & RANSAC & 83.90 & 84.21 & 82.13 & 73.26 & 75.38 & \multicolumn{1}{c|}{83.80} & 41.16 & 40.97 & 32.98 & 22.36 & 27.38 & \multicolumn{1}{c|}{31.71} & 12.79 & 12.37 & 9.17 & 5.55 & 6.39 & 7.00 & 66.38 \\ \hline
\multirow{2}{*}{\begin{tabular}[c]{@{}c@{}}SOTA Loop \\ Closure Method\end{tabular}} & STD & RANSAC & 40.58 & 41.35 & 26.74 & 26.74 & 27.22 & \multicolumn{1}{c|}{31.17} & 4.58 & 4.46 & 1.05 & 1.06 & 0.39 & \multicolumn{1}{c|}{1.41} & 0.74 & 0.18 & 0.24 & 0.00 & 0.00 & 0.05 & 48.56 \\
 & Cont2 & GMM & 86.78 & 81.45 & 78.43 & 84.88 & 85.56 & \multicolumn{1}{c|}{91.71} & 55.38 & 37.03 & 42.48 & 40.18 & 36.35 & \multicolumn{1}{c|}{50.31} & 24.67 & 8.18 & 19.05 & 13.04 & 9.95 & 16.82 & \textbf{12.44} \\ \hline
\multirow{8}{*}{\begin{tabular}[c]{@{}c@{}}Graph-theoratic\\ Method\end{tabular}} & FPFH & TEASER++ & 99.34 & 98.12 & 97.78 & 93.68 & {\ul 97.04} & \multicolumn{1}{c|}{99.29} & 87.18 & 84.23 & 74.80 & 72.66 & 67.64 & \multicolumn{1}{c|}{79.82} & 55.12 & 48.49 & 34.64 & 35.37 & 30.11 & 34.15 & 184.7 \\
 & FPFH & Quatro & 99.61 & 97.99 & {\ul 98.37} & {\ul 97.51} & 96.45 & \multicolumn{1}{c|}{99.81} & 91.86 & 84.65 & 83.51 & 83.53 & 74.95 & \multicolumn{1}{c|}{88.97} & 61.63 & 54.63 & 45.48 & 50.62 & 42.63 & 45.24 & 184.62 \\
 & FPFH & 3DMAC & 98.17 & 98.24 & 96.31 & 89.70 & 95.38 & \multicolumn{1}{c|}{98.32} & 76.80 & 73.03 & 59.23 & 58.00 & 54.29 & \multicolumn{1}{c|}{64.84} & 38.20 & 29.80 & 20.71 & 23.72 & 19.80 & 19.17 & 183.31 \\
 & FPFH & PAGOR & \textbf{100.0} & \textbf{100.0} & \textbf{99.70} & 97.67 & \textbf{99.05} & \multicolumn{1}{c|}{{\ul 99.93}} & \textbf{94.11} & \textbf{93.77} & \textbf{91.82} & \textbf{88.06} & \textbf{84.50} & \multicolumn{1}{c|}{{\ul 91.91}} & {\ul 68.56} & \textbf{65.48} & {\ul 53.21} & \textbf{53.67} & \textbf{49.20} & {\ul 49.64} & 212.02 \\
 & GEM & TEASER++ & {\ul 99.74} & 98.12 & 97.64 & 96.51 & 96.92 & \multicolumn{1}{c|}{99.81} & 89.62 & 80.71 & 75.99 & 69.33 & 67.25 & \multicolumn{1}{c|}{86.26} & 60.89 & 46.97 & 41.75 & 28.07 & 29.04 & 43.97 & 53.63 \\
 & GEM & Quatro & 99.61 & 96.11 & 97.64 & 97.18 & 95.26 & \multicolumn{1}{c|}{99.74} & 89.99 & 80.50 & 77.57 & 73.87 & 67.84 & \multicolumn{1}{c|}{89.03} & 62.71 & 47.86 & 44.05 & 34.40 & 31.79 & 47.95 & {\ul 53.52} \\
 & GEM & 3DMAC & 99.61 & 95.36 & 96.75 & 93.19 & 95.38 & \multicolumn{1}{c|}{98.78} & 84.66 & 66.08 & 64.64 & 48.04 & 56.43 & \multicolumn{1}{c|}{78.35} & 46.29 & 31.05 & 29.76 & 19.00 & 21.76 & 36.45 & 61.38 \\
 & GEM & PAGOR & \textbf{100.0} & {\ul 99.62} & \textbf{99.70} & \textbf{99.50} & \textbf{99.05} & \multicolumn{1}{c|}{\textbf{100.0}} & {\ul 92.89} & {\ul 91.18} & {\ul 86.41} & {\ul 85.35} & {\ul 79.53} & \multicolumn{1}{c|}{\textbf{94.52}} & \textbf{72.19} & {\ul 61.92} & \textbf{58.45} & {\ul 49.51} & {\ul 46.36} & \textbf{62.53} & 76.89 \\ \hline
\end{tabular}%
}
\end{table*}
% Please add the following required packages to your document preamble:
% \usepackage{multirow}
% \usepackage{graphicx}
% \usepackage[normalem]{ulem}
% \useunder{\uline}{\ul}{}
\begin{table}[]
\centering
\caption{Global Registration Evaluation on Apollo-LC Dataset}
\label{tab:apollo_bm}
\resizebox{\columnwidth}{!}{%
\begin{tabular}{c|c|cccc|c}
\hline
\multirow{2}{*}{Front-end} & \multirow{2}{*}{Back-end} & \multicolumn{4}{c|}{Recall (\%) on APOLLO-LC $\uparrow$} & \multirow{2}{*}{\begin{tabular}[c]{@{}c@{}}Average\\ Time (ms)\end{tabular}} \\ \cline{3-6}
 &  & 0-10m & 10-20m & 20-30m & 30-40m &  \\ \hline
FPFH & RANSAC & 99.69 & 92.65 & 67.86 & 35.92 & 272.73 \\
FPFH & TEASER++ & 99.85 & 95.00 & 75.23 & 44.53 & 251.62 \\
FPFH & Quatro & 99.68 & 94.17 & 76.10 & 48.05 & 255.39 \\
FPFH & 3DMAC & 99.09 & 84.32 & 51.71 & 23.61 & 257.92 \\
FPFH & PAGOR & 99.99 & 97.78 & 86.04 & 60.67 & 301.86 \\ \hline
GEM & RANSAC & 97.46 & 74.86 & 42.01 & 19.91 & 52.82 \\
GEM & TEASER++ & 99.98 & {\ul 98.30} & {\ul 91.89} & 77.39 & \textbf{38.56} \\
GEM & Quatro & {\ul 99.99} & 98.17 & 91.71 & {\ul 77.91} & {\ul 38.63} \\
GEM & 3DMAC & 99.80 & 95.56 & 84.91 & 68.33 & 55.58 \\
GEM & PAGOR & \textbf{100.0} & \textbf{99.15} & \textbf{95.50} & \textbf{84.51} & 64.31 \\ \hline
\end{tabular}%
}
\end{table}
\subsubsection{KITTI-10m Dataset}

\revised{Although the KITTI-10m dataset has limitations in registration performance evaluation, we include it in Table \ref{tab:kitti_10m} for consistency with prior studies. We use the more accurate Semantic KITTI\cite{behley2021ijrr} ground truth for benchmarking, addressing inaccuracies in the official KITTI ground truth. Table \ref{tab:kitti_10m} shows that deep learning-based methods using new ground truth pose perform comparably to their original reported results. By adjusting FPFH and TEASER++ parameters, we found traditional methods outperform those in earlier research\cite{zhang20233d}. It should be noted that a discrepancy in 3DMAC performance arises when using MNN for correspondence filtering, resulting in fewer correspondences (5000 in the original paper\cite{zhang20233d}) and decreased results.}

\subsubsection{Results on Loop Closure Datasets}

\revised{In this study, we assess our registration method on loop closure point clouds with distances up to $30\sim40$ meters, meeting most robotics application requirements. Results on KITTI-LC, KITT360-LC, and Apollo-LC datasets are detailed in Tables \ref{tab:kitti_bm}, \ref{tab:kitti360_bm}, and \ref{tab:apollo_bm}, respectively. Methods are categorized into deep learning-based, RANSAC-based, and graph-theoretical at the backend, with our GEM and PAGOR performances evaluated when integrated with these methods. We also benchmark against state-of-the-art loop closure techniques capable of global registration. Additional GEM matching and registration results are visualized in Figure \ref{figure:visualization}.}

\revised{Deep learning methods show a performance drop in loop closure scenarios, as Tables \ref{tab:kitti_bm} and \ref{tab:kitti360_bm} depict. This is attributed to the challenging nature of loop closures with varied viewpoints and a training set on consecutive frames with limited scale and pose diversity. RANSAC-based methods perform well with low outlier ratios when using high-quality descriptors like FPFH within a 10m translation. Yet, their performance declines with larger translations, or when using GEM with high outlier ratios, as the computational complexity of RANSAC increases exponentially with the outlier ratio\cite{inproceedings}. Our PAGOR outshines TEASER++ and Quatro in graph-theoretical methods, benefiting from our distrust-and-verify framework, with only a marginal 20-millisecond time increase. Section \ref{exp:ab_study} explores this performance-efficiency trade-off in depth.}

\revised{3DMAC shares our hypothesis generation and evaluation philosophy. Admittedly, the clique consisting of inliers must be the maximal clique as depicted in TEASER\cite{yang2020teaser}. However, it may miss the true maximal clique due to SVD inaccuracies and an unfit evaluation function, as discussed in Section \ref{sec:verify}. In contrast to 3DMAC, our approach focuses on using multiple compatibility thresholds and an accurate evaluation function to select a maximum clique that is the true maximal clique.}

In addition to specialized registration methods, we also test advanced loop closure systems with metric pose estimation (STD\cite{yuan2023std}, Cont2\cite{jiang2023contour}) on the same datasets. We find that STD is sensitive to the physical distance of two point clouds, which we attribute to potential shortcomings in the reproducibility of their feature points (despite increasing the number to 1000). Cont2 performs fairly well with point clouds close to each other, and the speed is noticeably faster than all others, owing to its remarkably efficient scan representation. We note that both methods typically balance recall and precision of loop detection, therefore they could be too conservative for registration-only tasks.

\subsubsection{Registration Accuracy}

\revised{As shown in Figure~\ref{figure:tf_err}, we compared the performance of GEM and FPFH when integrated with various back-end solutions, and we applied the same to PAGOR. Consistent with the registration recall benchmark, both GEM and PAGOR outperform the other methods in terms of accuracy. This superior performance can be attributed to two main factors: First, unlike point-based registration methods, GEM-based registration leverages additional geometric constraints including plane-to-plane and line-to-line correspondences, followed by a distribution-to-distribution registration approach. Second, during the verification stage, the evaluation function preferentially selects the most accurate transformation, further enhancing performance.}

\subsection{Ablation Study}
\label{exp:ab_study}

% Please add the following required packages to your document preamble:
% \usepackage{multirow}
% \usepackage{graphicx}
% \usepackage[table,xcdraw]{xcolor}
% Beamer presentation requires \usepackage{colortbl} instead of \usepackage[table,xcdraw]{xcolor}
% \usepackage[normalem]{ulem}
% \useunder{\uline}{\ul}{}
\begin{table*}[]
\centering
\caption{\revised{Ablation Study on KITTI-LC Dataset}}
\label{tab:kitti_ab}
\resizebox{\textwidth}{!}{%
\begin{tabular}{cc|c|clcccllllllllcc|c}
\hline
\multicolumn{2}{c|}{} &  & \multicolumn{15}{c|}{Recall Rate (\%) on KITTI-LC $\uparrow$} &  \\ \cline{4-18}
\multicolumn{2}{c|}{} &  & \multicolumn{5}{c|}{0-10m} & \multicolumn{5}{c|}{10-20m} & \multicolumn{5}{c|}{20-30m} &  \\ \cline{4-18}
\multicolumn{2}{c|}{\multirow{-3}{*}{Variables}} & \multirow{-3}{*}{Values} & 00 & \multicolumn{1}{c}{02} & 05 & 06 & \multicolumn{1}{c|}{08} & \multicolumn{1}{c}{00} & \multicolumn{1}{c}{02} & \multicolumn{1}{c}{05} & \multicolumn{1}{c}{06} & \multicolumn{1}{c|}{08} & \multicolumn{1}{c}{00} & \multicolumn{1}{c}{02} & \multicolumn{1}{c}{05} & 06 & 08 & \multirow{-3}{*}{Average Time (ms)} \\ \hline
Plane-aided Seg & (1) & w/o & \textbf{100.0} & \multicolumn{1}{c}{\textbf{95.35}} & \textbf{100.0} & \textbf{100.0} & \multicolumn{1}{c|}{\textbf{100.0}} & \multicolumn{1}{c}{89.15} & \multicolumn{1}{c}{70.20} & \multicolumn{1}{c}{92.46} & \multicolumn{1}{c}{99.60} & \multicolumn{1}{c|}{94.44} & \multicolumn{1}{c}{40.16} & \multicolumn{1}{c}{46.64} & \multicolumn{1}{c}{63.01} & 95.24 & 70.27 & 57.63 \\ \hline
 & (2) & Plane & 92.74 & \multicolumn{1}{c}{50.58} & 87.50 & 99.22 & \multicolumn{1}{c|}{75.37} & \multicolumn{1}{c}{48.68} & \multicolumn{1}{c}{26.26} & \multicolumn{1}{c}{47.24} & \multicolumn{1}{c}{85.66} & \multicolumn{1}{c|}{37.65} & \multicolumn{1}{c}{15.22} & \multicolumn{1}{c}{8.97} & \multicolumn{1}{c}{17.35} & 59.52 & 8.65 & 44.98 \\
 & (3) & Cluster & {\ul 99.67} & \multicolumn{1}{c}{93.60} & 98.86 & \textbf{100.0} & \multicolumn{1}{c|}{98.51} & \multicolumn{1}{c}{73.90} & \multicolumn{1}{c}{55.55} & \multicolumn{1}{c}{80.90} & \multicolumn{1}{c}{98.41} & \multicolumn{1}{c|}{84.57} & \multicolumn{1}{c}{26.77} & \multicolumn{1}{c}{23.77} & \multicolumn{1}{c}{42.92} & 85.32 & 44.86 & 43.72 \\
 & (4) & Line & 66.34 & \multicolumn{1}{c}{41.86} & 50.00 & 85.27 & \multicolumn{1}{c|}{74.63} & \multicolumn{1}{c}{30.79} & \multicolumn{1}{c}{26.26} & \multicolumn{1}{c}{26.63} & \multicolumn{1}{c}{47.41} & \multicolumn{1}{c|}{62.96} & \multicolumn{1}{c}{9.71} & \multicolumn{1}{c}{15.25} & \multicolumn{1}{c}{10.96} & 23.01 & 43.78 & 37.41 \\
 & (5) & Plane+Cluster & \textbf{100.0} & \multicolumn{1}{c}{90.12} & \textbf{100.0} & \textbf{100.0} & \multicolumn{1}{c|}{\textbf{100.0}} & \multicolumn{1}{c}{94.13} & \multicolumn{1}{c}{67.68} & \multicolumn{1}{c}{91.46} & \multicolumn{1}{c}{100.0} & \multicolumn{1}{c|}{95.06} & \multicolumn{1}{c}{44.62} & \multicolumn{1}{c}{34.98} & \multicolumn{1}{c}{58.45} & 97.22 & 65.41 & 63.54 \\
 & (6) & Plane+Line & 99.34 & \multicolumn{1}{c}{65.70} & 94.32 & \textbf{100.0} & \multicolumn{1}{c|}{99.25} & \multicolumn{1}{c}{78.01} & \multicolumn{1}{c}{57.07} & \multicolumn{1}{c}{73.37} & \multicolumn{1}{c}{97.61} & \multicolumn{1}{c|}{82.72} & \multicolumn{1}{c}{31.76} & \multicolumn{1}{c}{39.46} & \multicolumn{1}{c}{38.81} & 79.76 & 57.84 & 47.62 \\
\multirow{-6}{*}{GEM Type} & (7) & Line+Cluster & \textbf{100.0} & \multicolumn{1}{c}{90.70} & \textbf{100.0} & \textbf{100.0} & \multicolumn{1}{c|}{\textbf{100.0}} & \multicolumn{1}{c}{90.91} & \multicolumn{1}{c}{63.13} & \multicolumn{1}{c}{91.96} & \multicolumn{1}{c}{99.60} & \multicolumn{1}{c|}{95.68} & \multicolumn{1}{c}{45.67} & \multicolumn{1}{c}{44.39} & \multicolumn{1}{c}{61.19} & 90.87 & 71.89 & 49.56 \\ \hline
 & (8) & [0.99] & \textbf{100.0} & \multicolumn{1}{c}{79.65} & \textbf{100.0} & \textbf{100.0} & \multicolumn{1}{c|}{\textbf{100.0}} & \multicolumn{1}{c}{93.55} & \multicolumn{1}{c}{59.60} & \multicolumn{1}{c}{92.96} & \multicolumn{1}{c}{\textbf{100.0}} & \multicolumn{1}{c|}{96.30} & \multicolumn{1}{c}{42.78} & \multicolumn{1}{c}{42.15} & \multicolumn{1}{c}{54.34} & 94.05 & 64.86 & 46.73 \\
 & (9) & [0.95] & \textbf{100.0} & 83.72 & \textbf{100.0} & \textbf{100.0} & \multicolumn{1}{c|}{\textbf{100.0}} & 94.43 & 63.13 & 94.97 & \textbf{100.0} & \multicolumn{1}{l|}{96.91} & 46.98 & 46.19 & 64.38 & 98.02 & 70.27 & 47.18 \\
 & (10) & [0.9] & \textbf{100.0} & 86.05 & \textbf{100.0} & \textbf{100.0} & \multicolumn{1}{c|}{\textbf{100.0}} & 94.13 & 63.64 & 94.97 & \textbf{100.0} & \multicolumn{1}{l|}{\textbf{98.15}} & 49.87 & 45.74 & 65.75 & 97.22 & 69.19 & 48.82 \\
 & (11) & [0.8] & \textbf{100.0} & 86.04 & \textbf{100.0} & \textbf{100.0} & \multicolumn{1}{c|}{\textbf{100.0}} & 93.55 & 65.66 & 96.48 & \textbf{100.0} & \multicolumn{1}{l|}{96.91} & 46.72 & 43.95 & 64.38 & 98.41 & 69.73 & 49.02 \\
 & (12) & [0.99, 0.95] & \textbf{100.0} & 86.05 & \textbf{100.0} & \textbf{100.0} & \multicolumn{1}{c|}{\textbf{100.0}} & 96.19 & 64.65 & 96.48 & \textbf{100.0} & \multicolumn{1}{l|}{{\ul 97.53}} & 53.28 & 48.43 & 68.03 & 98.02 & 71.35 & 49.33 \\
\multirow{-6}{*}{p-values} & (13) & [0.99, 0.95, 0.9] & \textbf{100.0} & 88.95 & \textbf{100.0} & \textbf{100.0} & \multicolumn{1}{c|}{\textbf{100.0}} & 96.48 & 66.16 & 97.49 & \textbf{100.0} & \multicolumn{1}{l|}{\textbf{98.15}} & 59.06 & 50.22 & 73.06 & 98.41 & 74.59 & 55.37 \\ \hline
Comp. Thd & (14) & [0.2m,0.4m,0.6m,0.8m] & \textbf{100.0} & {\ul 91.86} & \textbf{100.0} & \textbf{100.0} & \multicolumn{1}{c|}{\textbf{100.0}} & 96.77 & \textbf{72.22} & 96.98 & \textbf{100.0} & \multicolumn{1}{l|}{\textbf{98.15}} & 57.22 & \textbf{54.26} & 70.78 & {\ul 99.21} & 74.59 & {\color[HTML]{FD6864} 70.99} \\ \hline
Covariance & (15) & $\Sigma$ & \textbf{100.0} & {\ul 91.86} & \textbf{100.0} & \textbf{100.0} & \multicolumn{1}{c|}{\textbf{100.0}} & 96.19 & 66.16 & {\ul 97.99} & \textbf{100.0} & \multicolumn{1}{l|}{\textbf{98.15}} & 57.48 & {\ul 52.47} & 73.97 & {\ul 99.21} & 76.76 & {\color[HTML]{FD6864} 71.79} \\ \hline
Center & (16) & $\overline{\mu}$ & \textbf{100.0} & 86.05 & \textbf{100.0} & \textbf{100.0} & \multicolumn{1}{c|}{\textbf{100.0}} & {\ul 97.36} & 68.18 & 95.48 & \textbf{100.0} & \multicolumn{1}{l|}{\textbf{98.15}} & \textbf{65.62} & 52.02 & 73.52 & 98.81 & \textbf{82.16} & {\color[HTML]{333333} 64.22} \\ \hline
 & (17) & Tukey & \textbf{100.0} & 90.70 & \textbf{100.0} & \textbf{100.0} & \multicolumn{1}{c|}{\textbf{100.0}} & \textbf{97.65} & 69.19 & \textbf{98.49} & \textbf{100.0} & \multicolumn{1}{l|}{{\ul 97.53}} & 61.94 & 52.02 & {\ul 74.43} & \textbf{99.60} & {\ul 77.84} & 63.09 \\
 & (18) & Cauchy & \textbf{100.0} & 89.53 & \textbf{100.0} & \textbf{100.0} & \multicolumn{1}{c|}{\textbf{100.0}} & 96.77 & 68.18 & 95.48 & \textbf{100.0} & \multicolumn{1}{l|}{\textbf{98.15}} & 54.33 & 50.22 & 67.58 & {\ul 99.21} & 74.05 & 64.90 \\
 & (19) & TLS & \textbf{100.0} & 91.28 & \textbf{100.0} & \textbf{100.0} & \multicolumn{1}{c|}{\textbf{100.0}} & {\ul 97.36} & 69.19 & \textbf{98.49} & \textbf{100.0} & \multicolumn{1}{l|}{\textbf{98.15}} & 59.84 & 51.57 & \textbf{74.89} & {\ul 99.21} & 76.76 & 63.12 \\
\multirow{-4}{*}{\begin{tabular}[c]{@{}c@{}}Robust \\ Kernal \\ Function\end{tabular}} & (20) & Huber & \textbf{100.0} & 89.53 & \multicolumn{1}{l}{\textbf{100.0}} & \textbf{100.0} & \multicolumn{1}{c|}{\textbf{100.0}} & 95.01 & 67.17 & 93.47 & \textbf{100.0} & \multicolumn{1}{l|}{{\ul 97.53}} & 47.24 & 47.53 & 65.30 & 98.02 & 70.81 & 64.48 \\ \hline
Default & (21) & See Table \ref{tab:paras} & \textbf{100.0} & 91.28 & \textbf{100.0} & \textbf{100.0} & \multicolumn{1}{c|}{\textbf{100.0}} & \textbf{96.77} & \multicolumn{1}{c}{{\ul 70.20}} & \textbf{98.49} & \textbf{100.0} & \multicolumn{1}{l|}{{\ul 97.53}} & {\ul 62.47} & 52.02 & {\ul 74.43} & \textbf{99.60} & 78.38 & 64.99 \\ \hline
\end{tabular}%
}
\end{table*}
\begin{figure}[t]
	\centering
	\subfigure{	
		\includegraphics[width=0.9\columnwidth]{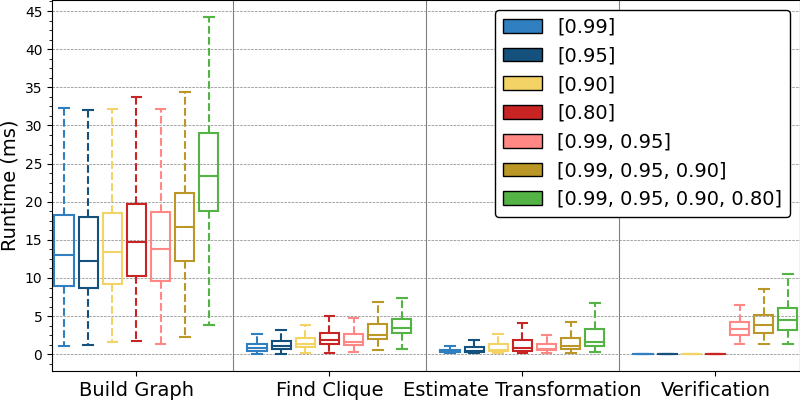}}
	\caption{\revised{The time consumption of PAGOR with different p-values combinations.}}
    \label{figure:runtime}
\end{figure}

In this section, we validate the effectiveness of the main contributions of our method via conducting ablation studies. \revised{All results are shown in Table \ref{tab:kitti_ab}. In Row 1, we bypass plane-aided segmentation and apply direct clustering, subsequently categorizing the results into PCL types based on eigenvalues. A comparative analysis to Row 21 reveals that plane-aided segmentation's benefits increase with translation magnitude. In translations up to 10 meters, an exception is found in sequence 02, sourced from highways with minimal planar features, suggesting that plane-aided segmentation might lead to over-segmentation and less repeatable clustering for this case. Rows 2-7 show the registration impact of various PCL type combinations, with the general trend indicating that clusters have a more pronounced effect than planes, and planes more than lines. Introducing an additional geometric type consistently improves registration accuracy.}

\revised{In rows 8--13 of the dataset, we begin by considering the smallest p-values and incrementally include larger ones, evaluating them both as individual factors and in various combinations. The results show that combinations of multiple p-values consistently yield higher recall rates than either individual p-values or any smaller subsets thereof. As illustrated in Figure~\ref{figure:runtime}, the time required by PAGOR increases with the addition of each p-value. Notably, constructing the compatibility graph is the most time-consuming process. Moreover, our proposed graduated maximum clique (MAC) solver ensures the time complexity of finding MACs within a pyramid graph does not increase linearly.}

\revised{In Rows 14 and 15, we apply predetermined compatibility thresholds such as 0.2m, 0.4m, 0.6m, and 0.8m, or we substitute our proposed pseudo covariance matrix with a statistical covariance matrix $\Sigma$ to compute thresholds. These modifications result in an increase in computational time. This can be attributed to the potential introduction of inaccurate compatibility thresholds, which introduce unnecessary graph edges, thereby reducing efficiency. In Row 16, we experiment with utilizing the geometric centroid $\overline{\mu}$ of the 3D bounding box instead of the default statistical center $\mu$ when constructing TRIMs for registration. This method shows a slight performance increase of sequence 00 and 08 in the [20m, 30m] range, possibly because the symmetry of the 3D box mitigates the center uncertainty attributable to partial observations. Nevertheless, due to the 'distrust-and-verify' framework employed, performance fluctuations remain minimal.}

\revised{Lastly, in Rows 17-20, as detailed in Section \ref{sec:verify}, we examine a variety of robust kernel functions. The experiments demonstrate that DCS (default), Tukey, and TLS kernels outperform other alternatives in terms of recall rate.}

\subsection{Limitation}
\begin{figure*}[t]
	\centering
	\includegraphics[width=\linewidth]{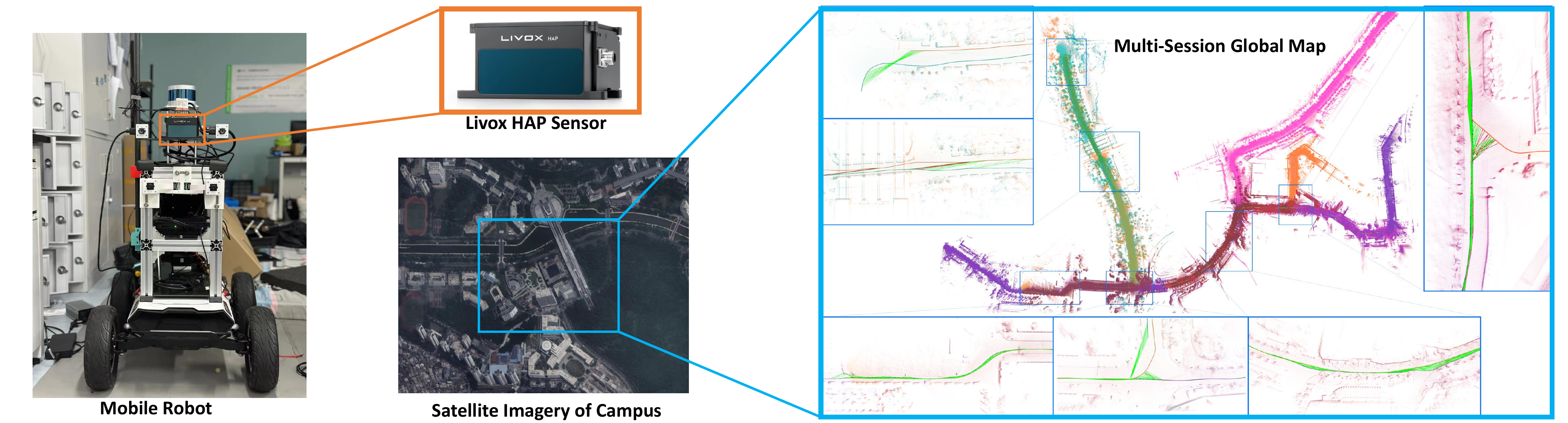}
	\caption{The visualized results of multi-session mapping using our proposed G3Reg. The Campus-MS dataset is collected on a mobile robot platform equipped with a solid-state Livox HAP Sensor. Though the sensor type is different from Velodyne, we still use the same parameter settings for G3Reg. As a result, G3Reg provides stable transformation constraints for the multi-session mapping system (shown in green lines, best viewed in an enlarged format).}
	\label{fig:campus}
\end{figure*}

\begin{figure}[t]
	\centering
	\subfigure[GEM Matching]{	
		\includegraphics[width=8cm]{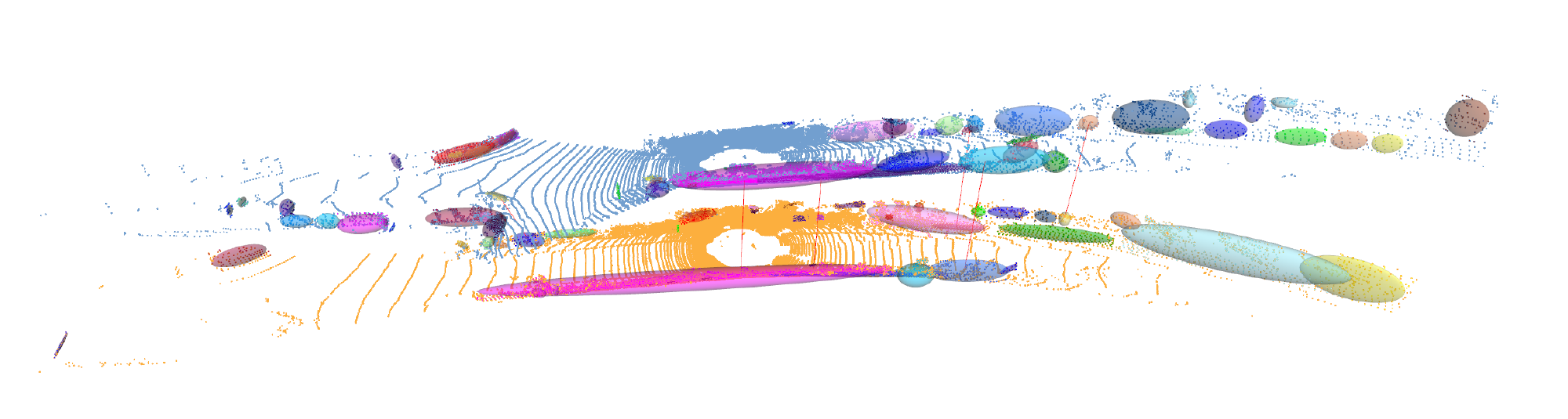}}
	\subfigure[Registration result with G3Reg]{	
		\includegraphics[width=8cm]{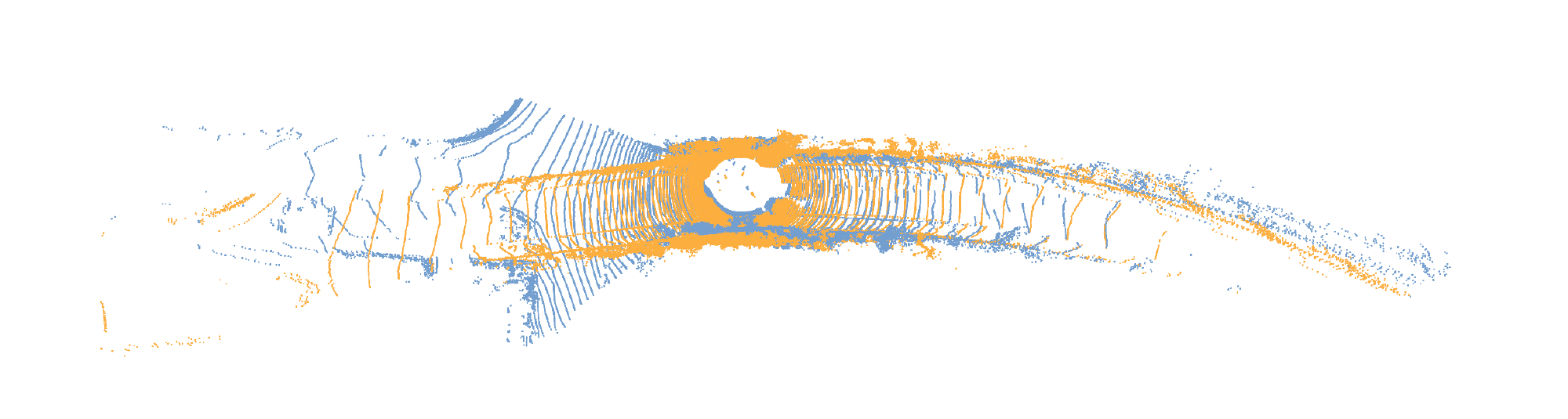}}
    \subfigure[Ground truth result]{	
		\includegraphics[width=8cm]{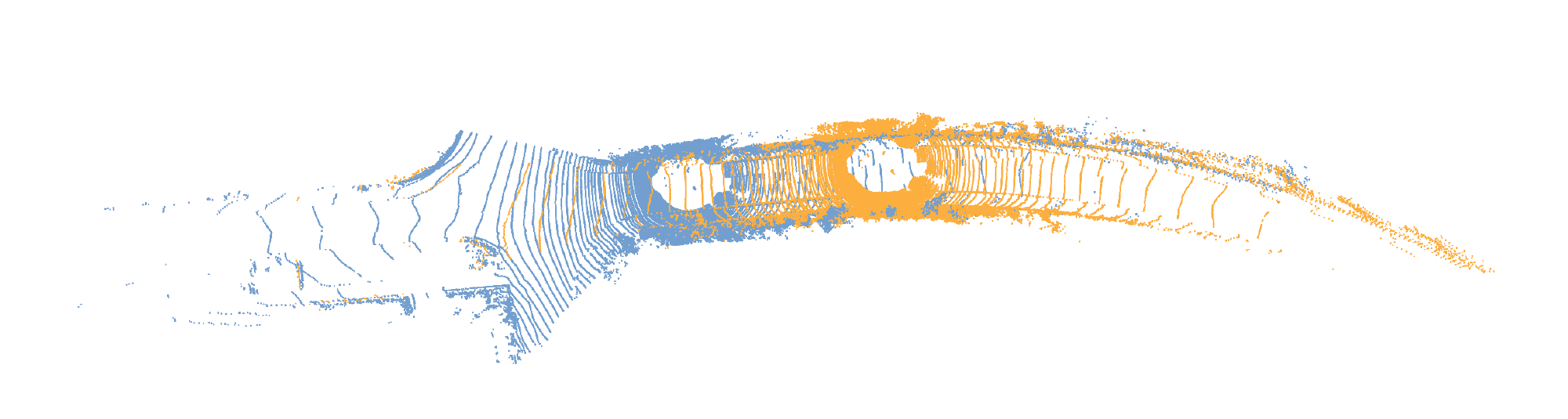}}
	\caption{Global registration failure using G3Reg. This is a very challenging case with a vehicle moving on a narrow, geometrically uninformative road. The ground truth translation is 26.18 m.}
    \label{figure:failure}
\end{figure}

\revised{The primary limitation of the proposed G3Reg lies in its front-end processing. As a segment-based registration method, it necessitates clear segmentation of the scene. This requirement does not imply that the scene must be inherently structured, like a town, but rather that it should be free of extensive, interconnected objects such as continuous rows of bushes along a road or walls flanking a corridor. Such repetitive and expansive features tend to introduce ambiguity in GEM matching and are illustrated in Figure \ref{figure:failure}, where our method struggles in environments that are geometrically uninformative. In these cases, GEMs are prone to significant uncertainties and exhibit low repeatability.}

\revised{To mitigate these limitations, one potential strategy could be to extract FPFH within the point cloud segment of the large GEMs, which could potentially increase the inlier rate and registration recall. Alternatively, leveraging semantic information could enable better scene segmentation, an approach that we have successfully implemented in the conference version of our work\cite{qiaoiros2023}. Moreover, developing a classifier to detect the localizability\cite{nubert2022learning} is also promising.}

\subsection{Real-World Experiments}
\label{sec:real-world}

% Please add the following required packages to your document preamble:
% \usepackage{multirow}
% \usepackage{graphicx}
% \usepackage[normalem]{ulem}
% \useunder{\uline}{\ul}{}
\begin{table}[]
\centering
\caption{Success Rate on Campus-MS Dataset}
\label{tab:campus}
\resizebox{\columnwidth}{!}{%
\begin{tabular}{cclccccc}
\hline
\multirow{2}{*}{Front-end} & \multirow{2}{*}{Back-end} & \multicolumn{5}{c}{Recall (\%) on Campus-MS $\uparrow$} & \multirow{2}{*}{\begin{tabular}[c]{@{}c@{}}Average\\ Time (ms)\end{tabular}} \\ \cline{3-7}
 &  & \multicolumn{1}{c}{00} & 01 & 02 & 03 & 04 &  \\ \hline
FPFH & TEASER & 89.17 & {\ul 94.59} & 28.57 & 87.16 & 89.68 & 164.88 \\
FPFH & PAGOR & \textbf{95.81} & \textbf{99.61} & 57.14 & \textbf{93.63} & \textbf{96.21} & 190.91 \\
GEM & TEASER & 64.25 & 55.60 & {\ul 71.43} & 70.45 & 65.68 & \textbf{19.54} \\
GEM & PAGOR & {\ul 92.54} & 94.21 & \textbf{92.86} & {\ul 93.03} & {\ul 94.16} & {\ul 54.78} \\ \hline
\end{tabular}%
}
\end{table}

As introduced in Section~\ref{sec:dataset}, we also make a multi-session dataset at a university campus, named Campus-MS, which is collected using a mobile platform equipped with a solid-state LiDAR sensor. This dataset consists of five distinct robot travels (sessions), each collected in different days. To ensure accurate pose ground truth, Real-Time Kinematic (RTK) technology was employed during data collection. For every frame of one travel, we identify frames in other trips if their overlaps exceed 40\%. The quantitative results in Table \ref{tab:campus} show that our method can still keep its registration efficiency and performance without changing any hyperparameters.

In order to verify the proposed G3Reg in practical applications, we embed it into a map merging task, as shown in Figure \ref{fig:campus}. Specifically, we first use LiDAR odometry to build subgraphs, then use GPS to find potential loop closures between multiple trajectories. Considering that GPS could not provide stable transformation, we use G3Reg to obtain relative poses and establish loop edges without pose priors. Finally, a global map is achieved by pairwise consistency maximization (PCM) \cite{mangelson2018pairwise} for false loop removal and global pose graph optimization. The precise map merge result demonstrates the effectiveness of G3Reg in practical applications. 

\section{Conclusion}

In this work, we present G3Reg, an effective and efficient framework for global registration of LiDAR point clouds, which addresses the limitations of conventional approaches. G3Reg introduces two primary contributions: GEMs for constructing correspondences and a distrust-and-verify scheme called PAGOR. By leveraging these innovations, G3Reg significantly improves the robustness and efficiency of global registration. Our extensive comparison experiments demonstrate that G3Reg outperforms SOTA methods. \revised{We present results across diverse hyperparameters and point cloud distributions, showcasing its applicability in various real-world scenarios. Furthermore, the promising directions for future improvement are discussed, including the integration of higher-level descriptors into GEM-based matching or the detection of localizability.}

\bibliographystyle{IEEEtran}
\bibliography{root}

\vfill
\end{document}